\def\eqref#1{equation~\ref{#1}}
\def\1{\bm{1}}
\DeclareMathAlphabet{\mathsfit}{\encodingdefault}{\sfdefault}{m}{sl}
\SetMathAlphabet{\mathsfit}{bold}{\encodingdefault}{\sfdefault}{bx}{n}
\DeclareMathOperator*{\argmin}{arg\,min}
\newtheorem{theorem}{Theorem}
\newtheorem{example}{Example}
\title{On the Limits of Sparse Autoencoders: A Theoretical Framework and Reweighted Remedy}
\author{
  {\bf Jingyi Cui}$^{1}$\thanks{Equal contribution} \qquad
  {\bf Qi Zhang}$^{1}$\footnotemark[1] \qquad
  {\bf Yifei Wang}$^{2}$ \qquad
  {\bf Yisen Wang}$^{1,3}$\thanks{Corresponding author: Yisen Wang (yisen.wang@pku.edu.cn)} \\
  $^1$State Key Lab of General AI, 
  School of Intelligence Science and Technology, Peking University \\
  $^2$Amazon AGI SF Lab\thanks{
  This work was completed at MIT prior to Yifei Wang joining Amazon.
  } \qquad
  $^3$Institute for Artificial Intelligence, Peking University \\
}
\begin{document}

\maketitle

\begin{abstract}
    Sparse autoencoders (SAEs) have recently emerged as a powerful tool for interpreting the features learned by large language models (LLMs). By reconstructing features with sparsely activated networks, SAEs aim to recover complex superposed polysemantic features into interpretable monosemantic ones. Despite their wide applications, it remains unclear under what conditions SAEs can fully recover the ground truth monosemantic features from the superposed polysemantic ones. In this paper, we provide the first theoretical analysis with a closed-form solution for SAEs, revealing that they generally fail to fully recover the ground truth monosemantic features unless the ground truth features are extremely sparse. To improve the feature recovery of SAEs in general cases, we propose a reweighting strategy targeting at enhancing the reconstruction of the ground truth monosemantic features instead of the observed polysemantic ones. We further establish a theoretical weight selection principle for our proposed weighted SAE (WSAE). Experiments across multiple settings validate our theoretical findings and demonstrate that our WSAE significantly improves feature monosemanticity and interpretability.
\end{abstract}

\section{Introduction}

The ``black-box'' nature is a long-standing problem plaguing the 
mechanistic interpretability of deep neural networks. One of the key problems is feature \textit{polysemanticity}, where an individual neuron is often activated by multiple semantically unrelated features \citep{elhage2022toy,hanni2024mathematical, scherlis2022polysemanticity,zhang2023tricontrastive,zhang2024beyond}. This issue is particularly evident in large language models (LLMs) as neurons rarely correspond to distinct and well-defined features.
Previous studies have proposed the \textit{superposition} hypothesis that the polysemantic features are linear combinations of the \textit{monosemantic} ones, so that models can represent more features than they have dimensions \citep{elhage2022toy,hanni2024mathematical, scherlis2022polysemanticity}. 

To disentangle the superposed polysemantic features, sparse autoencoders (SAEs) \citep{gao2024scaling,makhzani2013k,minegishi2025rethinking,ng2011sparse} have demonstrated significant potential for identifying interpretable monosemantic features. 
They have been widely used as a promising approach to enhancing the interpretability of LLMs \citep{cunningham2023sparse,ferrando2024know,gao2024scaling,lieberum2024gemma} and VLMs \citep{daujotas2024interpreting,lim2024sparse,lou2025sae,thasarathan2025universal}. 
Typically, an SAE has a single wide layered encoder-decoder architecture, with sparse activations such as ReLU \citep{cunningham2023sparse}, JumpReLU \citep{rajamanoharan2024jumping}, TopK \citep{gao2024scaling}, BatchTopK \citep{bussmann2024batchtopk}, etc. 
While previous works mainly focused on studying architecture \citep{braun2024identifying,makhzani2013k,rajamanoharan2024improving,tibshirani1996regression,bussmann2025learning} or evaluation \citep{minegishi2025rethinking,karvonen2025saebench} of SAEs, the theoretical understanding of the identifiability of SAEs is still lacking. 
Specifically, we wonder: 
\textit{Can SAEs recover the ground truth monosemantic features from the polysemantic inputs?}

To answer the above question, in this paper, we investigate the theoretical conditions of SAE feature recovery. First of all, under the superposition hypothesis, we propose a theoretical framework and derive the closed-form optimal solution to SAEs. 
Nonetheless, we show that a full recovery of the ground truth monosemantic features is not theoretically guaranteed under general conditions, with standard SAEs plagued by feature shrinking and feature vanishing.
As a possible explanation why SAEs do work well in some empirical cases, we further show that the sparsity of the ground truth features might be the key.
Specifically, if the ground truth monosemantic features are extremely sparse, the optimal solution to an SAE is unique and precisely recovers the ground truth features. 
Moreover, considering that the sparsity of the ground truth features is not controllable through training, when the extreme sparsity assumption is not met (SAE feature not fully recoverable), we propose a reweighting strategy to enhance the reconstruction of the ground truth features. We also propose a principle for the weight selection through theoretical analysis. Specifically, we derive the theoretical relationship between the loss of SAE feature reconstruction and that of the ground truth feature reconstruction, and discuss that smaller weights on the more polysemantic dimensions help reduce the negative interferences hindering the reconstruction of ground truth features.

Our contributions are summarized as follows.
\begin{itemize}
    \item We propose a theoretical framework for analyzing SAE feature recovery based on the superposition hypothesis and
    derive a closed-form solution to SAEs. Based on this, we show that SAEs fails to recover the ground truth monosemantic features in general unless the ground truth features are extremely sparse.
    \item In low sparsity conditions where SAE full recovery does not meet, we propose a reweighting strategy to improve the reconstruction of ground truth features by SAEs and theoretically discuss the principle of weight selection.
    \item We validate our theoretical findings through experiments and show that our reweighting strategy significantly enhances the monosemanticity and interpretability of SAE features.
\end{itemize}

\section{Related Works}

\subsection{Polysemanticity and Superposition Hypothesis.}

A widely accepted explanation for feature polysemanticity is the superposition hypothesis \citep{elhage2022toy,hanni2024mathematical,scherlis2022polysemanticity}, which regards a polysemantic dimension to be an approximately linear combination of several natural semantic concepts. To investigate the mechanism of polysemanticity, mainstream studies reveal that superposition occurs when a model represent more features than they have dimensions.
Specifically, \cite{elhage2022toy} introduced a single-layer toy model demonstrating that polysemanticity occurs when ReLU networks are trained on synthetic data with sparse input features. \cite{hanni2024mathematical} proved that superposition is actively helpful for efficiently accomplishing the task of emulating circuits via mathematical models of computation.
\cite{scherlis2022polysemanticity} explained polysemanticity through the lens of feature capacity, suggesting that the optimal capacity allocation tends to polysemantically represent less important features. 
Aside from the dimension restriction, \cite{lecomte2024causes} demonstrated that polysemanticity can also incidentally occur because of sparse regularization or neural noise.

In this paper, we adopt the superposition hypothesis to simulate the generation of polysemantic features from the ground truth monosemantic inputs.

\subsection{Sparse Autoencoders (SAEs).}
Sparse autoencoder (SAEs) automatically learn features from unlabeled data, typically trained with sparsity priors such as ReLU activation and $l$-1 penalty \citep{ng2011sparse}. To mitigate the feature suppression caused by $l$-1 regularization \citep{tibshirani1996regression,wright2024addressing}, \cite{makhzani2013k} proposed k-sparse autoencoders, which replace the $l$-1 penalty with a Top-$k$ activation function. To address the same problem, \cite{rajamanoharan2024improving} proposed the gated SAE that decouples detection of which features are present from estimating their magnitudes.
Besides, to ensure the features learned are functionally important, \cite{braun2024identifying} propose to optimize the downstream KL divergence instead of reconstruction MSE.
Moreover, by using $k$-sparse autoencoders, \cite{gao2024scaling} found clean scaling laws with respect to autoencoder size and sparsity.
\cite{minegishi2025rethinking} proposed a suite of evaluations for SAEs to analyze the quality of monosemantic features by focusing on polysemous words. They showed that compared with alternative activations such as JumpReLU and Top-$k$, ReLU still leads to competitive semantic quality.

In this paper, we investigate the identifiability of SAEs. Specifically, we are interested in the conditions under which SAEs can uniquely recover the ground truth monosemantic features from the polysemantic inputs. We consider the reconstruction-based SAEs with ReLU or Top-$k$ activations.

\section{Preliminaries \& Mathematical Formulations}
\label{sec::formulation}

\begin{figure}[!t]
    \vspace{-3mm}
    \centering
    \includegraphics[width=0.65\linewidth, height=0.32\linewidth]{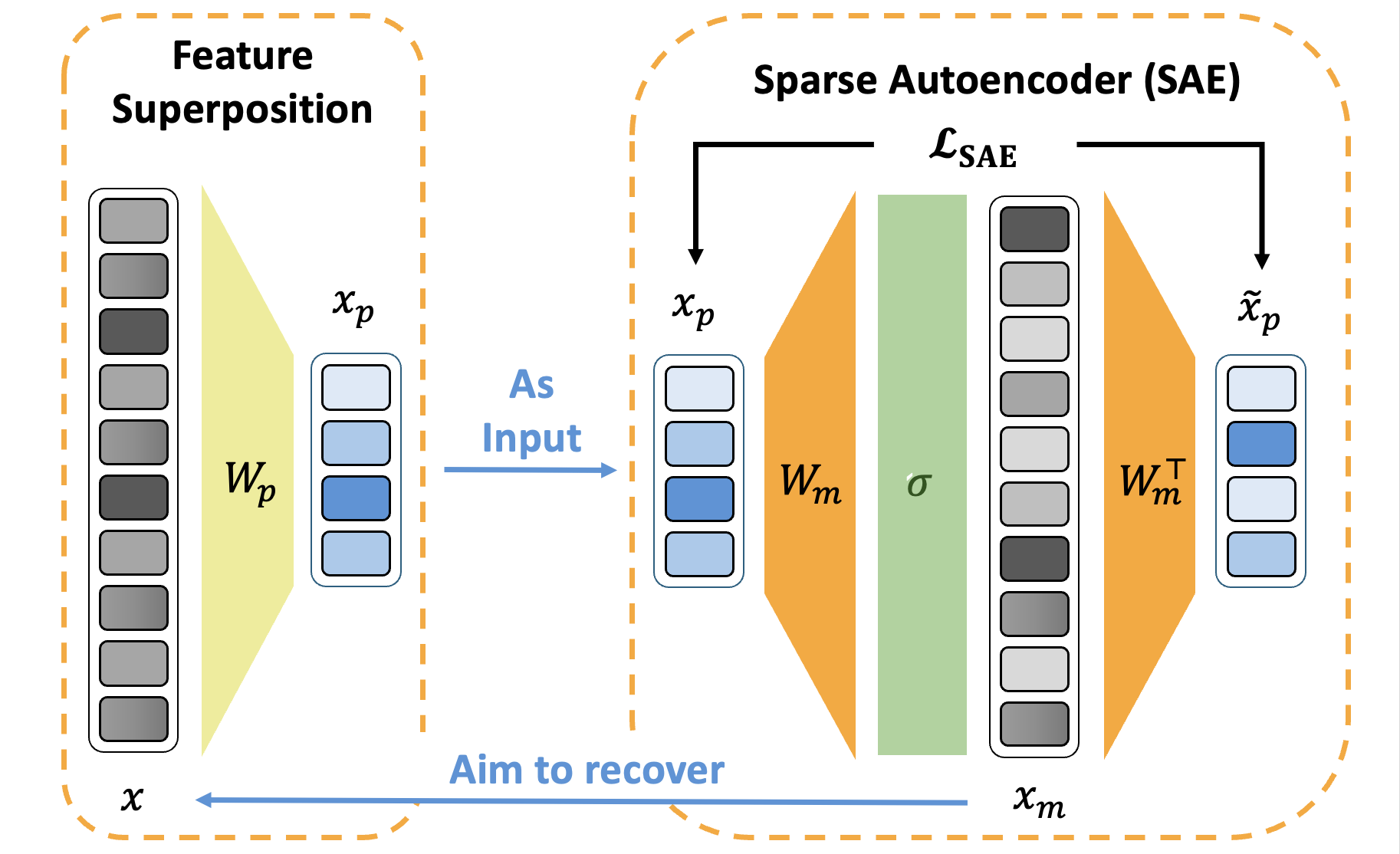}
    \caption{Theoretical framework for sparse autoencoder (SAE) feature recovery. The superposed polysemantic features $x_p$, composed of ground truth monosemantic features $x$ with matrix $W_p$, serve as the input to the SAE. For the SAE, $W_m$ denotes the weight matrix, $\sigma$ denotes the sparse activation function, and $\mathcal{L}_{\mathrm{SAE}}$ denotes the reconstruction loss of $x_p$. Ideally, we expect the SAE output $x_m$ to fully recover the ground truth monosemantic features $x$ through reconstruction of $x_p$.}
    \label{fig::formulation}
    \vspace{-3mm}
\end{figure}

\textbf{Notations.} Denote $x := (x_1,\ldots,x_n)^\top \in \mathbb{R}^{n}$ as the ground truth monosemantic feature with dimension $n>0$, and $x_p \in \mathbb{R}^{n_p}$ as the superposed polysemantic feature, where we assume $n > n_p > 0$. Moreover, we use $x_m \in \mathbb{R}^{n_m}$ to denote the feature learned by sparse autoencoder, where $n_m > n_p$.
For the weight matrices, we denote $W_p \in \mathbb{R}^{n_p \times n}$ and $W_m \in \mathbb{R}^{n_m \times n_p}$. Throughout this paper, $\|\cdot\|$ denotes the $l_2$ norm if not otherwise specified. 
For mathematical conciseness, we denote $\boldsymbol{1}$ as the all-one vector, and $\boldsymbol{0}$ as the all-zero vector. Also, we denote $[n]=\{1,\ldots,n\}$.
Furthermore, for a matrix $W$ we use $W_{[i,:]}$ to denote the $i$-th row of $W$, and use $W_{[:,j]}$ to denote the $j$-th column of $W$. For a vector $x$, we use $x_i$ to denote the $i$-th element of $x$.

\textbf{Sparsity of ground truth features.} 
For the ground truth monosemantic features $x:=(x_1,\ldots,x_n)^\top$, we assume the $x_i$'s are identically and independently distributed, where given the sparse factor $S \in [0,1]$, we have $x_i > 0$ with probability $(1-S)$ and $x_i=0$ with probability $S$, for $i \in [n]$. The sparsity assumption follows from \cite{elhage2022toy} and has been observed through empirical evidence \citep{olah2020zoom,elhage2022softmax}, but ours is weaker since we do not require the uniform distribution assumption of \cite{elhage2022toy} when $x_i>0$.

\textbf{Superposition of features.} 
Superposition has been widely hypothesized in previous works \citep{arora2018linear,olah2020zoom,elhage2022toy}. 
Following the superposition hypothesis, 
we formulate the superposed polysemantic feature as a linear transformation of the ground truth monosemantic input feature $x$. 
Specifically, we assume the polysemantic features to be 
\begin{align}\label{eq::poly}
    x_p=W_p x
\end{align} 
given the ground truth monosemantic features $x$ with some weight matrix $W_p$.
Moreover, as discussed in \cite{elhage2022toy}, the superposed dimensions usually have negative interferences, i.e., $W_{p,[:i]}^\top W_{p,[:j]} \leq 0$ for $i\neq j$, and form geometric sturctures of digons or polygons.

\textbf{Sparse autoencoders (SAEs).} We use a sparse autoencoder to recover the ground truth monosemantic feature $x$ from the polysemantic input $x_p$. SAE is a neural network with a single hidden layer and a sparse activation function $\sigma:\mathbb{R}^{n_m}\to \mathbb{R}^{n_m}$ such as ReLU \citep{cunningham2023sparse}, JumpReLU \citep{rajamanoharan2024jumping}, Top-$k$ \citep{gao2024scaling}, etc. 
For mathematical simplicity, we omit the bias term and define the encoder and decoder as
\begin{align}
    x_m &= \sigma(W_mx_p) \\
    \tilde{x}_p &= W_m^\top x_m.
\end{align}
It is trained with a reconstruction loss 
\begin{align}\label{eq::SAE}
	\mathcal{L}_{\mathrm{SAE}}
    (W_m;x_p) 
    = \mathbb{E}_{x_p} \|x_p - \tilde{x}_p\|^2
    = \mathbb{E}_{x} \|W_px - W_m^\top \sigma(W_mW_px)\|^2.
\end{align}
As not all SAE architectures include the sparsity regularization, e.g. \cite{gao2024scaling}, we here omit the $l_1$ penalty and focus on the reconstruction loss only.
Then under the optimal weight matrix 
\begin{align}
    W_m^* = \arg\min_{W_m} \mathcal{L}_{\mathrm{SAE}}(W_m;x_p)
\end{align} 
the learned monosemantic feature is $x_m = \sigma(W_m^* x_p)$.

\textbf{Feature Recovery.} 
Ideally, we expect the SAE-learned feature $x_m$ to exactly recover the ground truth $x$, i.e., $x_m=x$ if $n_m=n$. 
Nonetheless, as the ground truth dimension is unknown, we believe additional zero-valued dimensions and dimension reordering are also acceptable.
Therefore, in this paper, we say an SAE fully recovers the ground truth monosemantic features $x$ by reconstructing polysemantic features $x_p$ if 
\begin{align}
    &x_m \sim x, \qquad\qquad \text{ if } n_m=n \text{ and }\\
    &x_m \sim (x,\boldsymbol{0})^\top, \quad \ \ \text{ if } n_m>n, \quad\ \ \ 
\end{align}
where we denote $x' \sim x$ if $x'=x$ after a reordering of the feature indexes, i.e., there exists a matrix $I^*$ formed by swapping rows of the identity matrix, such that $x'=I^*x$.
Or equivalently, from the perspective of the loss function, we say an SAE fully recovers the monosemantic features if the solution $W_m^* = \arg\min_{W_m} \mathcal{L}_{\mathrm{SAE}}(W_m;x_p)$ recovers the ground truth monosemantic features up to index reordering and zero padding.

The overall theoretical framework for SAE feature recovery is illustrated in Figure \ref{fig::formulation}.

\section{Theoretical Results}\label{sec::condition}

In this section, we discuss the provable conditions for SAE feature recovery. First of all, in Section \ref{sec::sufficient}, by deriving a closed-form solution to SAEs, we show that in general cases, an SAE may fail to fully recover the ground truth monosemantic features, plagued by problems such as feature shrinking and feature vanishing. 
Then in Section \ref{sec::necessary}, we show that one possible reason why an SAE still works in some real cases is the sparsity of the ground truth monosemantic features. We show that under the extreme sparsity condition, the unique optimal solution to an SAE fully recovers the ground truth features.
In order to overcome such limitations of SAEs, in Section \ref{sec::WSAE}, we propose a reweighted version of SAE to achieve a better reconstruction of the ground truth monosemantic features, and theoretically discuss the weight selection principle.  All proofs can be found in Appendix \ref{app::proof}. {In addition, we present more discussions and extensions of our theoretical framework in Appendix \ref{sec::thm_extension}.}

\subsection{SAEs Fail to Recover Ground Truth Features in General}\label{sec::necessary}

By the mathematical formulations introduced in Section \ref{sec::formulation}, we derive the closed-form solution to SAEs in Theorem \ref{thm::argminSAE}. Note that the assumption of $W_p$ follows from the feature geometry of superposition observed in \cite{elhage2022toy}, where the columns of $W_p$ within the superposed dimensions form digons or polygons.

\begin{theorem}[Closed-Form Solution to SAEs]\label{thm::argminSAE}
    Let $\mathcal{L}_{\mathrm{SAE}}$ be defined in \eqref{eq::SAE} with sparse activation function $\sigma$. If $n_m \geq n$ and the columns of $W_p$ within the superposed dimensions form digons/polygons, then we have 
    $W_m^*=I^*(W_p,\boldsymbol{0})^\top \in \arg\min_{W_m}\mathcal{L}_{\mathrm{SAE}}(W_m;x_p)$, where $I^*$ is formed by swapping row $i$ with row $j\neq i$ of the identity matrix. 
\end{theorem}

Theorem \ref{thm::argminSAE} shows that given the superposition matrix $W_p$, its transpose $W_p^\top$ serves as the closed-form solution to SAEs in the sense of zero-padding and row-reordering. Specifically, the SAE recovered features are $x_m=\sigma(W_p^\top x_p)$ with zero-padding and index-reordering.

However, by such a solution, we discuss that the recovered features $x_m$ do not always recover the ground truth monosemantic feature $x$. Instead, they often deviate from the ground truth because of the \textit{feature shrinking} and \textit{feature vanishing} phenomena, which we illustrate in the following examples. 

\textbf{Feature shrinking.} The SAE-recovered features from the polysemantic dimensions are shrunk to be smaller compared to their ground truth value. 
Typically, we say a superposed feature dimension is ``more polysemantic'' if it has interference with more ground truth monosemantic features. Then the more polysemantic a feature dimension is, the more severe its SAE-recovered monosemantic features are shrunk in value.
In Example \ref{eg::shrink}, as the second and third dimensions of $x$ are more polysemantic in the superposed feature $x_p$, they shrink severely after recovered by an SAE (from $(1.0,0.8)$ to $(0.2,0)$, whereas the first dimension, which is monosemantic, does not shrink. This could lead to incorrect interpretation of the feature, because the top-1 activated dimension of the ground truth $x$ should be the second dimension (1.0), whereas when using an SAE, the top-1 activated dimension of $x_m$ turns out to be the first one (0.5). In other words, SAEs have a tendency to better interpret the relatively monosemantic features, whereas overlook the relatively polysemantic ones.

\begin{example}[Feature Shrinking]\label{eg::shrink}
    Suppose $n_m=n=3$, $n_p=2$, $x=(0.5,1.0,0.8)^\top$, and 
    $W_p = 
	\begin{bmatrix}
		1 & 0 & 0 \\
		0 & 1 & -1 \\
	\end{bmatrix}$.
    Then we have $x_p = (0.5,0.2)^\top$ and
    \begin{align}
        x_m 
        = \sigma(W_p^\top W_p x)
        = \sigma\bigg(
	\begin{bmatrix}
		1 & 0 & 0 \\
		0 & 1 & -1 \\
		0 & -1 & 1 \\
	\end{bmatrix}
	\begin{bmatrix}
	    0.5\\
            \textcolor{red}{\mathbf{1.0}}\\
            0.8\\
	\end{bmatrix}
        \bigg)
        =\begin{bmatrix}
	    \textcolor{red}{\mathbf{0.5}}\\
            0.2\\
            0\\
	\end{bmatrix}.
    \end{align}
\end{example}

\textbf{Feature Vanishing.} When feature shrinking is severe enough, some features in the polysemantic dimensions even vanish and can rarely recovered by an SAE. In Example \ref{eg::vanish}, the second and third dimensions of the SAE-recovered feature $x_m$ vanish completely, making $x_m$ have even less effective dimensions than $x_p$.

\begin{example}[Feature Vanishing]\label{eg::vanish}
    Suppose $n_m=n=3$, $n_p=2$, $x=(0.7,0.5,0.3)^\top$, and 
    $W_p = 
	\begin{bmatrix}
		0 & \sqrt{3}/2 & -\sqrt{3}/2 \\
		1 & -1/2 & -1/2 \\
	\end{bmatrix}$.
    Then we have $x_p=W_px=(0.1\sqrt{3},0.3)^\top$, but
    \begin{align}
        x_m
        = \sigma(W_p^\top W_p x)
        = \sigma\bigg(
	\begin{bmatrix}
		1 & -1/2 & -1/2 \\
		-1/2 & 1 & -1/2 \\
		-1/2 & -1/2 & 1 \\
	\end{bmatrix}
	\begin{bmatrix}
	    0.7\\
            0.5\\
            0.3\\
	\end{bmatrix}
        \bigg)
        =\begin{bmatrix}
	    0.3\\
            \textcolor{red}{\boldsymbol{0}}\\
            \textcolor{red}{\boldsymbol{0}}\\
	\end{bmatrix}.
    \end{align}
\end{example}

\subsection{SAE Full Recovery under Extreme Sparsity}\label{sec::sufficient}

Despite the disappointing theoretical conclusion in Section \ref{sec::sufficient}, in this part, we propose one possible explanation for why SAEs do work well in some real-life cases, that is, the sparsity of the ground truth monosemantic features is the key to SAE feature recovery.

\begin{theorem}[Optimality under extreme sparsity]\label{thm::argminSAES}
    Let $\mathcal{L}_{\mathrm{SAE}}$ be defined in \eqref{eq::SAE}. For $n_m \geq n$, and the columns of $W_p$ have non-positive interferences, if $S\to 1$, then we have 
    $W_m^*=I^*(W_p,\boldsymbol{0})^\top \in \arg\min_{W_m}\mathcal{L}_{\mathrm{SAE}}(W_m;x_p)$, where $I^*$ is formed by swapping row $i$ with row $j\neq i$ of the identity matrix. Accordingly, we have $I^*\sigma(W_m^*x_p)=x$ for arbitrary $x$.
\end{theorem}

Theorem \ref{thm::argminSAES} shows that when the ground truth feature $x$ is extremely sparse and the hidden dimension $n_m$ is large enough, $W_p^\top$ is the optimal solution to SAEs in the sense of zero-padding and row-reordering. 
Note that in Theorem \ref{thm::argminSAES}, the columns of $W_p$ are only required to have non-positive interferences, which follows from \cite{elhage2022toy} and is much weaker than the digon/polygon geometry condition required in Theorem \ref{thm::argminSAE}.
By such a solution, under the extreme sparsity condition $S\to 1$, the SAE-recovered features can fully recover the ground truth monosemantic feature $x$. Intuitively, under extreme sparsity, $x$ becomes 1-sparse with probability nearly 1, and feature shrinking and feature vanishing do not happen to 1-sparse features.

Moreover, in Theorem \ref{thm::unique}, we show that when $S\to 1$, the solution derived in Theorem \ref{thm::argminSAES} is unique.
\begin{theorem}[Uniqueness]\label{thm::unique}
    Let $\mathcal{L}_{\mathrm{SAE}}$ be defined in \eqref{eq::SAE}. For $n_m=n$ and the columns of $W_p$ have non-positive interferences, if $S\to 1$, then $W_m^*=I^*W_p^\top$ is the unique solution to $\arg\min_{W_m}\mathcal{L}_{\mathrm{SAE}}(W_m;x_p)$.
\end{theorem}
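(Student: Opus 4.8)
The plan is to lean on Theorem~\ref{thm::argminSAE}, which already shows that $W_m=W_p^\top$ attains the minimum (when $n_m=n$ the zero block in $(W_p,\boldsymbol{0})^\top$ is empty, so it reads $W_p^\top$); the whole content of the uniqueness claim is therefore that \emph{no other} matrix attains this minimum. The first step is to isolate, in the regime $S\to1$, the part of the loss that actually constrains $W_m$. Splitting the expectation in \eqref{eq::SAE} according to the support of $x$: the empty-support event contributes $0$; each singleton-support event $x=x_i\boldsymbol{e}_i$ contributes, using the positive homogeneity $\sigma(x_i v)=x_i\sigma(v)$ for $x_i>0$ (valid for both ReLU and Top-$k$), a term proportional to $\|c_i-W_m^\top\sigma(W_m c_i)\|^2$ with $c_i:=W_{p,[:,i]}$; and every event with two or more active coordinates has probability $O((1-S)^2)$. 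Hence
\begin{align}
    \mathcal{L}_{\mathrm{SAE}}(W_m;x_p) = (1-S)S^{n-1}\mu_2 \sum_{i=1}^{n}\big\|c_i - W_m^\top \sigma(W_m c_i)\big\|^2 + O\big((1-S)^2\big),
\end{align}
where $\mu_2=\mathbb{E}[x_i^2\mid x_i\neq 0]$ is a common constant. Dividing by the leading coefficient and letting $S\to1$ reduces the minimization to that of $F(W_m):=\sum_{i=1}^n\|c_i-W_m^\top\sigma(W_m c_i)\|^2$. Since $F\geq0$ and $F(W_p^\top)=0$ by Theorem~\ref{thm::argminSAE}, every minimizer must satisfy the per-feature reconstruction identities
\begin{align}\label{eq:plan-fixedpoint}
    W_m^\top \sigma(W_m c_i) = c_i, \qquad i \in [n].
\end{align}

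Second, I would convert \eqref{eq:plan-fixedpoint} into a rigidity statement. Writing the rows of $W_m$ as $r_1,\dots,r_n\in\mathbb{R}^{n_p}$ and using sparsity of $\sigma$, the decoded vector $W_m^\top\sigma(W_m c_i)$ is a nonnegative combination of only the rows $r_j$ that the activation on $W_m c_i$ keeps. The demand that this combination equal the unit-norm column $c_i$, together with the normalization and negative-interference structure of the toy model ($\|c_i\|=1$ and $\langle c_i,c_j\rangle\le 0$ for $j\neq i$), should force the activation to select for each feature $i$ a single hidden unit whose row is aligned with $c_i$, after which \eqref{eq:plan-fixedpoint} fixes the magnitude so that $r_{\pi(i)}=c_i$. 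Because $n_m=n$ exactly there is no spare capacity, so the induced map $\pi$ from features to units is a bijection, giving $W_m=PW_p^\top$ for a permutation matrix $P$; the $k=1$ case is transparent, since $\sigma(W_m c_i)$ then has a single nonzero entry and $c_i$ is forced parallel to one row of $W_m$. Finally, the recovery convention $\sigma(W_m^* x_p)=x$ (rather than a permuted copy) singles out $P=I$, yielding the stated $W_m^*=W_p^\top$.

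The step I expect to be the main obstacle is the rigidity argument for the general ReLU activation: unlike Top-$1$, ReLU may keep several coordinates of $W_m c_i$ active, so I must rule out ``mixed'' reconstructions in which $c_i$ is reassembled from several misaligned rows. The plan is to exploit the fact that \eqref{eq:plan-fixedpoint} holds for all $i$ simultaneously, so that negative interference constrains the signs of $\langle r_j,c_i\rangle$ across features, and then to run a case analysis on admissible activation supports; I expect, mirroring an explicit low-dimensional check, that only the $n$ permutations of $W_p^\top$ survive. A secondary technical point is justifying the passage to leading order: I must verify that the $O((1-S)^2)$ remainder neither creates spurious minimizers nor shifts the limiting $\arg\min$, which should follow from the remainder being uniformly higher order on the relevant compact region of parameters.
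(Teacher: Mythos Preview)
Your derivation of the per-feature fixed-point equations $W_m^\top\sigma(W_m c_i)=c_i$ is correct, and your route to them is cleaner than the paper's: you use that $F(W_p^\top)=0$ (from Theorem~\ref{thm::argminSAE}) to conclude each nonnegative summand must vanish at a minimizer, whereas the paper expands $\mathcal{L}_{\mathrm{SAE}}$ in index form, computes $\partial\mathcal{L}_u/\partial W_{m,[k,l]}$ explicitly, and reads off the same stationarity condition $W_{p,[:,u]}=W_m^\top\sigma(W_m W_{p,[:,u]})$. Both arguments land on the same system of equations, and both leave the final ``rigidity'' step---showing that these equations force $W_m=W_p^\top$---essentially unargued: the paper simply asserts ``if $n_m=n$, the unique solution is $W_m=W_p^\top$'' under the negative-interference assumption, while you at least outline a strategy (single-unit selection via the interference structure, then a bijection count) and honestly flag it as the main obstacle. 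Your approach buys simplicity (no lengthy gradient calculation) at no real cost; the paper's gradient computation would in principle also characterize non-global stationary points, but that extra information is never used. One point you handle more carefully than the paper: you notice that row permutations $W_m=PW_p^\top$ also satisfy the fixed-point equations and achieve zero loss, so literal uniqueness requires an additional identification convention---something the paper's statement and proof silently elide.
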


\subsection{A Reweighted Remedy for SAE Feature Recovery under General Sparsity}\label{sec::WSAE}

As we show in previous sections, SAEs fail to recover ground truth monosemantic features under general sparsity conditions. In this section, by taking a closer look, we show that this is largely because the SAE loss is not a direct reconstruction of the ground truth monosemantic feature $x$. Instead, it reconstructs the polysemantic $x_p=W_px$ because the ground truth $x$ is unknown.
In this case, the superposition matrix $W_p$ could mistakenly match the reconstructed polysemantic feature $\tilde{x}_p=W_m^\top x_m$ with $x_p$ even if the reconstructed monosemantic features $x_m$ do not match the ground truth $x$. 
Specifically, in this section, we first identify a gap between the SAE loss and the reconstruction loss of ground truth $x$, and then we discuss how a reweighting strategy narrows the gap and enhances the ground truth reconstruction.

\subsubsection{The Gap between SAE Reconstruction and Ground Truth Reconstruction}\label{sec::gap}

We first of all consider an ideal case, where we are allowed to directly reconstruct the ground truth monosemantic features $x$. Without loss of generality, we assume $n_m=n$, and define the ground truth reconstruction loss of $x$ as 
\begin{align}\label{eq::l_rec}
    \mathcal{L}_{\mathrm{GT}}(W_m;x)
    = \mathbb{E}_x \|x - x_m\|^2
    = \mathbb{E}_x \|x - \sigma(W_mx_p)\|^2
     = \mathbb{E}_x \|x - \sigma(W_mW_px)\|^2.
\end{align}

Then by comparing $\mathcal{L}_{\mathrm{SAE}}$ and $\mathcal{L}_{\mathrm{GT}}$, we derive the theoretical gap between the two losses.

\begin{theorem}[Gap between $\mathcal{L}_{\mathrm{SAE}}$ and $\mathcal{L}_{\mathrm{GT}}$]\label{thm::decomp}
    Let $\mathcal{L}_{\mathrm{SAE}}$ and $\mathcal{L}_{\mathrm{GT}}$ be defined in \eqref{eq::SAE} and \eqref{eq::l_rec}, respectively. 
    Then when $W_m=W_p^\top$, we have
    \begin{align}
        \mathcal{L}_{\mathrm{SAE}} (W_m;x_p)
        - \mathcal{L}_{\mathrm{GT}}(W_m;x) 
        = &[x - \sigma(W_p^\top W_px)]^\top (W_p^\top W_p - I_{n\times n}) [x - \sigma(W_p^\top W_px)].
        \label{eq::decomp}
    \end{align}
\end{theorem}

Theorem \ref{thm::decomp} shows that the gap between the SAE loss $\mathcal{L}_{\mathrm{SAE}} (W_m;x_p)$ and the ground truth reconstruction loss $\mathcal{L}_{\mathrm{GT}}(W_m;x)$.
Note that according to Theorem \ref{thm::argminSAE}, we consider the gap when the SAE loss reaches its optimal, i.e., $W_m=W_p^\top$.
The gap term depends on two terms, i.e., the gap between ground truth $x$ and the recovered feature $x_m=\sigma(W_p^\top W_px)$ and the gap between $W_p^\top W_p$ and the identity matrix $I_{n\times n}$. On the one hand, the gap term goes to zero if the features are perfectly recovered by an SAE. For instance, by Theorem \ref{thm::argminSAES}, when $x$ is extremely sparse, we have $x=\sigma(W_p^\top W_px)$ and therefore the gap vanishes. 
On the other hand, in general cases where SAEs fail to perfectly recover the ground truth $x$, i.e., $x\neq \sigma(W_p^\top W_px)$, the gap term depends largely on $W_p^\top W_p-I_{n\times n}$, which unfortunately cannot be reduced because for SAEs, $W_p$ is given as an input rather than learned.

\subsubsection{Adaptively ReWeighted Sparse AutoEncoders (WSAEs)}

As one possible solution to narrow the gap shown in Theorem \ref{thm::decomp}, we propose the \textit{reWeighted Sparse AutoEncoders} (WSAEs) with adaptive weights according to the polysemantic level of each dimension.
Specifically, given weights $\gamma_i>0$, $i\in [n_p]$, we define the WSAE loss as
\begin{align}\label{eq::WSAE}
    \mathcal{L}_{\mathrm{WSAE}}(W_m;x_p) 
    = \mathbb{E}_{x_p} \|\Gamma [x_p - W_m^\top \mathrm{ReLU}(W_mx_p)]\|_2^2,
\end{align}
where $\Gamma = \mathrm{diag}(\gamma_1,\ldots,\gamma_{n_p})$.

\begin{theorem}[Gap between $\mathcal{L}_{\mathrm{WSAE}}$ and $\mathcal{L}_{\mathrm{GT}}$]\label{thm::decompw}
    Let $\mathcal{L}_{\mathrm{WSAE}}$ and $\mathcal{L}_{\mathrm{GT}}$ be defined in \eqref{eq::WSAE} and \eqref{eq::l_rec}, respectively. 
    Then when $W_m=W_p^\top$, we have
    \begin{align}
        \mathcal{L}_{\mathrm{WSAE}} (W_m;x_p)
        - \mathcal{L}_{\mathrm{GT}}(W_m;x) 
        = [x - \sigma(W_mW_px)]^\top (W_p^\top\Gamma^\top \Gamma W_p - I_{n\times n}) [x - \sigma(W_mW_px)].
        \label{eq::decompw}
    \end{align}
\end{theorem}

Theorem \ref{thm::decompw} shows the gap between a reweighted SAE loss $\mathcal{L}_{\mathrm{SAE}} (W_m;x_p)$ and the reconstruction loss of the ground truth features $\mathcal{L}_{\mathrm{GT}}(W_m;x,x_p)$. 
Compared with Theorem \ref{thm::decomp} where $W_p^\top W_p - I_{n\times n}$ is fixed, when features are not perfectly recovered, i.e., $x\neq \sigma(W_mW_px)$, the gap in Theorem \ref{thm::decompw} is adjustable w.r.t. the weight matrix $\Gamma$. That is, given a properly chosen $\Gamma$, we can effectively narrow the gap term in \eqref{eq::decompw} and thus gain a better reconstruction of the ground truth features $x$.
As a special case, when the weight matrix $\Gamma$ is an identity matrix (uniform weights), the gap of WSAEs in Theorem \ref{thm::decompw} degenerates to that of SAEs shown in Theorem \ref{thm::decomp}.

Then we discuss how to select proper weights to narrow the gap term in \eqref{eq::decompw}. Specifically, we focus on the polysemantic level of each dimension. 
Observe that 
\begin{align}
    W_p^\top\Gamma^\top \Gamma W_p - I_{n\times n}
    =\begin{bmatrix}
        \gamma_1^2-1 & \cdots & \gamma_1^2 W_{p,[:,1]}^\top W_{p,[:,n]} \\
        \cdots & & \cdots \\
        \gamma_n^2 W_{p,[:,n]}^\top W_{p,[:,1]} & \cdots & \gamma_n^2-1
    \end{bmatrix}.
    \label{eq::WSigma}
\end{align}

Then according to \eqref{eq::WSigma}, for the relatively monosemantic dimensions with almost zero off-diagonal interference terms, we should assign weights near 1 to reduce $\gamma_i^2-1$, whereas for the relatively polysemantic dimensions, we should assign smaller weights to primarily reduce the off-diagonal negative interferences. In short, for WSAE reconstruction, we assign larger weights to the relatively monosemantic dimensions and smaller weights to the relatively polysemantic dimensions.

\section{Validation Experiments}\label{sec::exp}

In this section, we conduct numerical experiments to validate our theoretical findings. 
In Section \ref{sec::exp1}, we conduct validation experiments on synthetic data with known ground truth features. We validate that 1) SAEs in general fail to achieve full feature recovery unless the ground truth features are extremely sparse (Sections \ref{sec::necessary} and \ref{sec::sufficient}), and 2) our reweighting strategy can improve ground truth reconstruction when the sparsity level is low (Section \ref{sec::WSAE}).
In Section \ref{sec::exp2}, we conduct real-data experiments on both pretrained language and vision models to validate the effectiveness of our reweighted strategy in enhancing feature monosemanticity.

\subsection{Validation Experiments through Synthetic Data}\label{sec::exp1}

\textbf{Data Generation.} We follow the toy model settings in \cite{elhage2022toy}. 
Specifically, the polysemantic features are generated by \eqref{eq::poly} described in Section \ref{sec::formulation}, where the superposition matrix $W_p$ is learned by reconstructing the ground truth $x$ with a ReLU output model. We set the ground truth monosemantic feature dimension $n=200$ and input polysemantic feature dimension $n_p=20$ to generate polysemantic embeddings and then attempt to extract monosemantic features with SAEs trained on frozen polysemantic embeddings. When analyzing the influence of input sparsity, we set $S\in [0,1]$ the sparse factor, and let $x_i = 0$ with a probability $S $, and $x_i \sim \mathcal{U}(0,1]$ with probability $1-S$, for $i \in [n]$.

\textbf{Setups.} For both SAE and WSAE, we adopt ReLU activation function, and set the hidden dimension as $n_m=n=200$ (10 times the input dimension). For the weight selection in WSAE, we treat $W_p$ as unknown, and use the per-dimensional variance of $x_p$ as a proxy for monosemanticity. Specifically, let $s_i$ denote the variance of $x_p$ in the $i$-th dimension. Then given a tunable parameter $\alpha>0$, we set the weights as $\gamma_i=s_i^\alpha$. As we find that the experimental results are relatively robust against the choice of $\alpha$, we set $\alpha=1$ for the experiments.

\begin{wrapfigure}{r}{0.35\textwidth}
    \vspace{-4mm}
    \centering
    \includegraphics[width=.35\textwidth]{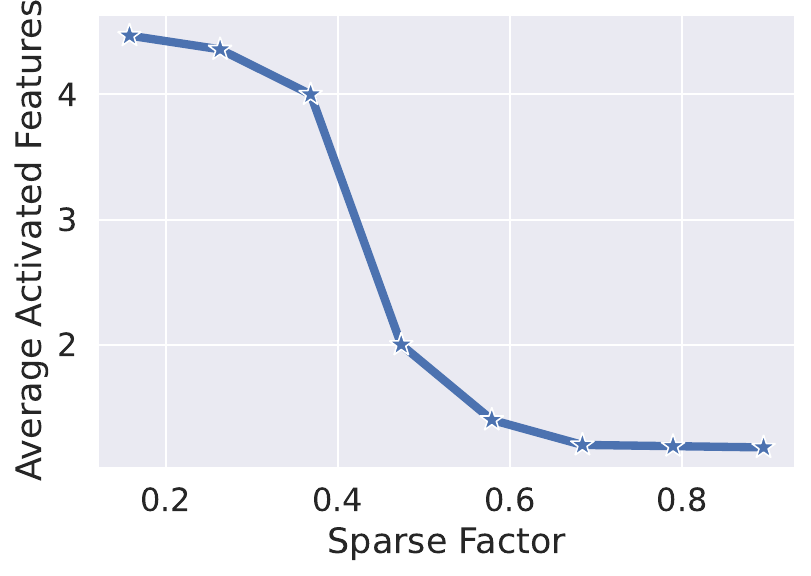}
    \vspace{-7mm}
    \caption{Monosemanticity (measured by the average activated features) of SAE features increases with increasing sparsity of ground truth monosemantic features.}
    \label{fig:empirical toy models}
    \vspace{-3mm}
\end{wrapfigure}
\textbf{Validation of SAE feature recovery.} We validate the theoretical results in Sections \ref{sec::necessary} and \ref{sec::sufficient} that SAEs in general fail to achieve full feature recovery unless the ground truth features are extremely sparse. 
Specifically, under different sparsity levels $S\in[0,1]$, we measure the monosemanticity of SAE latents by the average number of ground-truth features activated in the same SAE dimension (with more details shown in Appendix \ref{app::aaf}). If the SAE successfully recovers monosemantic features, each latent should be activated by only a small number of input features.
As shown in Figure \ref{fig:empirical toy models}, we observe a clear decrease in the average number of activated features (indicating a significant improvement in the monosemanticity) with increased input sparsity. This empirical result aligns well with our theoretical findings that SAEs fully recover ground truth monosemantic features only when the sparsity level is extreme.

\begin{figure}
    \hspace{-2mm}
    \vspace{-4mm}
    \centering
    \subfigure[]{\includegraphics[width=0.24\linewidth]{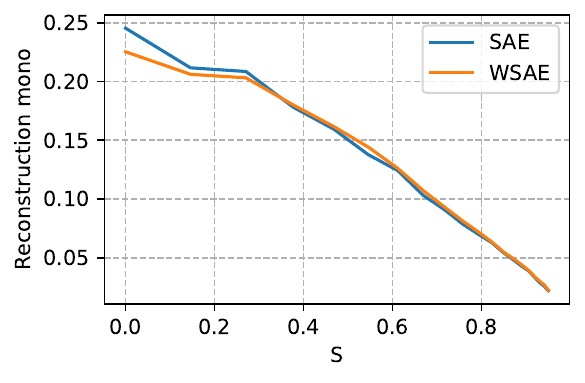}
    \label{fig::a}}
    \subfigure[]{\includegraphics[width=0.24\linewidth]{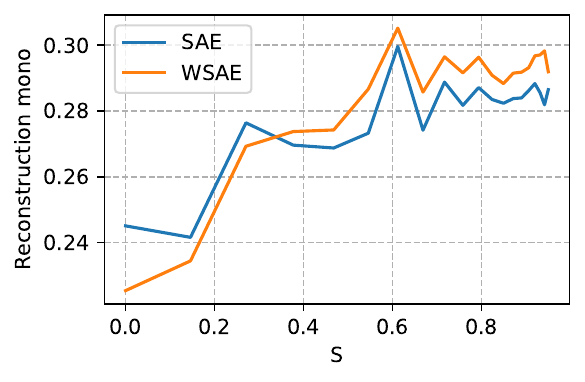}
    \label{fig::b}}
    \subfigure[]{\includegraphics[width=0.24\linewidth]{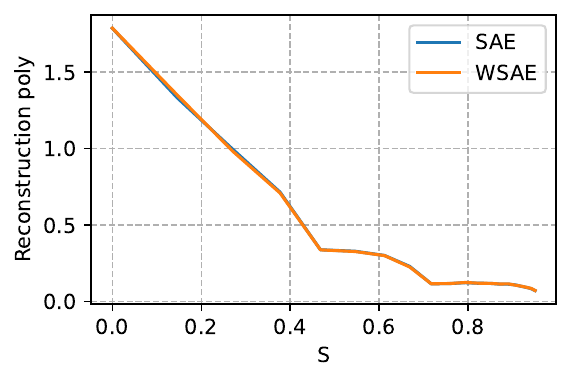}
    \label{fig::c}}
    \subfigure[]{\includegraphics[width=0.24\linewidth]{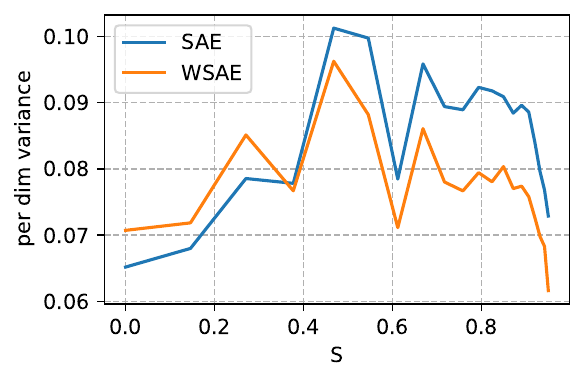}
    \label{fig::d}}
    \hspace{-2mm}
    \caption{Validation experiments of WSAE ground truth reconstruction on synthetic data. (a) Ground truth reconstruction error $\mathcal{L}_{\mathrm{GT}}$, where WSAE has lower error compared with SAE when the sparsity level $S$ is low. (b) Reconstruction error on the non-sparse dimensions of the ground truth monosemantic features, showing a greater error gap between WSAE and SAE. (c) The reconstruction error of the polysemantic features $x_p$, where the errors of the two methods are comparable. (d) Monosemanticity measured by per-dimensional variance, where WSAE features are more monosemantic compared with SAE features when the sparsity level is low.}
    \label{fig::toyexp}
    \vspace{-3mm}
\end{figure}

\textbf{Validation of WSAE ground truth reconstruction.} We validate the theoretical discussions in Section \ref{sec::WSAE} that WSAE features have better ground truth reconstruction than SAE features when the sparsity level of ground truth features is low. Specifically, we respectively measure the reconstruction loss of ground truth monosemantic features $\mathcal{L}_{GT}$ defined in \eqref{eq::l_rec}, the reconstruction loss of polysemantic features $\mathcal{L}_{\mathrm{SAE}}$ and $\mathcal{L}_{\mathrm{WSAE}}$ defined in \eqref{eq::SAE} and \eqref{eq::WSAE}, and the monosemanticity level measured by per-dimensional variance (where larger per-dimensional variance indicates higher monosemanticity), respectively. In Figure \ref{fig::toyexp}, when the sparsity level of the ground truth features is relatively low, we show that the WSAE-recovered features have better reconstruction of the ground truth monosemantic features $x$ compared with SAE-recovered features, whereas maintaining a comparable reconstruction of the polysemantic features $x_p$. Specifically, WSAE has a lower ground truth reconstruction error compared to SAE (Figure \ref{fig::a}), and if we evaluate only on the non-sparse dimensions of the ground truth $x$, the advantage of WSAE over SAE under low sparsity becomes more significant (Figure \ref{fig::b}). These empirical findings well verify the theoretical discussions in Section \ref{sec::WSAE} that a proper weight selection can narrow the gap between WSAE reconstruction error and ground truth reconstruction error. 
In addition, in Figure \ref{fig::c}, we show that the SAE and WSAE reconstruction errors of the polysemantic features $x_p$ are comparable, indicating that WSAEs do not lie far away from the Pareto-frontier of sparsity and reconstruction. In Figure \ref{fig::d}, we show that when sparsity is relatively low, WSAE features have better monosemanticity than SAE features, indicating better recovery of the ground truth monosemantic features.

\subsection{Empirical Evaluation of Reweighted SAEs through Real Data}\label{sec::exp2}

In this section, we evaluate the effectiveness of our proposed strategies on real-world datasets. We evaluate on both pretrained language models and vision models. 

\subsubsection{Experiments on Language Models}\label{sec::exp_lang}

\textbf{Setups.} We use Pythia-160M \citep{biderman2023pythia} as the frozen language model backbone and train sparse autoencoders (SAEs) on its internal activations. The latent dimension of the SAE is set to 32 times the input dimension, and we employ Top-$k$ activation ($k=32$) in the latent layer of SAEs. Guided by the theoretical analysis, we compare two training objectives: a standard reconstruction loss with uniform weights, and a weighted reconstruction loss that emphasizes monosemantic features. 

In real-world settings, we cannot directly access the ground-truth monosemanticity of features. Nonetheless, several surrogate metrics have been proposed to evaluate monosemanticity. For instance, \cite{wang2024learning} uses variance within each feature dimension as a surrogate while \cite{paulo2024automatically} introduces the auto-interpretability score, which leverages large language models to summarize whether the features activated in the same dimension are semantically similar. In our experiments, we use the variance as the metric when assigning weights, due to its computational simplicity. As discussed in \cite{wang2024learning}, monosemantic neurons have high deviation of activation value and low frequency of activation, so higher variance indicates stronger monosemanticity. 
Specifically, we let $s_i$ denote the variance of activations in the SAE latent dimension $i$. We set the weights as $\gamma_i=s_i^\alpha$ for $i \in [n_p]$, where $\alpha>0$ is a tunable parameter.
When $\alpha$ increases, the reconstruction loss assigns more weights to the reconstruction of monosemantic features.

On the other hand, for the evaluation of SAE latents, we use the auto-interpretability score \citep{paulo2024automatically} as it is more accurate to assess monosemanticity. To compute the score for a given SAE latent dimension, we first identify the top-activated samples for that dimension. A large language model (e.g., Llama3.1-8B \cite{touvron2023llama}) is then used to generate a natural language summary describing the shared characteristics of these samples. In the next step, another language model is prompted with the generated summary to predict whether additional samples in the dataset would activate the given SAE dimension. The prediction accuracy serves as the auto-interpretability score: higher accuracy indicates that the activated samples in the dimension are semantically similar, reflecting stronger monosemanticity. More details can be found in Appendix \ref{app::exp}.

\begin{table}[]
\caption{Auto-interpretability scores (\%) of SAEs trained following different layers (0-11) of Pythia-160M with original SAE and weighted SAE loss. SAEs trained with weighted SAE loss obtain higher auto-interpretability scores (i.e., stronger monosemanticity) across different situations.}
\setlength{\tabcolsep}{1.2mm}{
\begin{tabular}{lllllllllllll}
\toprule
                            & 0                                    & 1                                    & 2                                    & 3                                    & 4                                    & 5                                    & 6                                    & 7                                    & 8                                    & 9                                    & 10                                   & 11                          \\ \midrule
Original SAE                & 74.7                                 & 74.1                                 & 76.7                                 & 77.8                                 & 78.5                                 & 79.5                                 & 79.3                                 & 77.8                                 & 74.6                                 & 75.6                                 & 71.6                                 & 72.5                        \\
Weighted SAE ($\alpha$=0.5) & 75.4                                 & 77.6                                 & 76.4                                 & 77.9                                 & 79.3                                 & 79.6                                 & 79.8                                 & 77.4                                 & 78.6                                 & \textbf{79.3}                        & \textbf{76.1}                        & 72.6                        \\
Gains                       & {\color[HTML]{FE0000} +0.7}          & {\color[HTML]{FE0000} +3.5}          & {\color[HTML]{32CB00} -0.3}          & {\color[HTML]{FE0000} +0.1}          & {\color[HTML]{FE0000} +0.8}          & {\color[HTML]{FE0000} +0.1}          & {\color[HTML]{FE0000} +0.5}          & {\color[HTML]{34FF34} -0.4}          & {\color[HTML]{FE0000} +4.0}          & {\color[HTML]{FE0000} \textbf{+3.7}} & {\color[HTML]{FE0000} \textbf{+4.5}} & {\color[HTML]{FE0000} +0.1} \\
Weighted SAE ($\alpha$=1)   & \textbf{77.2}                        & \textbf{78.9}                        & \textbf{81.3}                        & \textbf{84.6}                        & \textbf{83.9}                        & \textbf{83.3}                        & \textbf{83.9}                        & \textbf{79.6}                        & \textbf{81.5}                        & 77.6                                 & 72.4                                 & \textbf{73.5}               \\
Gains                       & {\color[HTML]{FE0000} \textbf{+2.5}} & {\color[HTML]{FE0000} \textbf{+4.8}} & {\color[HTML]{FE0000} \textbf{+4.6}} & {\color[HTML]{FE0000} \textbf{+6.8}} & {\color[HTML]{FE0000} \textbf{+5.4}} & {\color[HTML]{FE0000} \textbf{+3.8}} & {\color[HTML]{FE0000} \textbf{+4.6}} & {\color[HTML]{FE0000} \textbf{+1.8}} & {\color[HTML]{FE0000} \textbf{+6.9}} & {\color[HTML]{FE0000} +2.0}          & {\color[HTML]{FE0000} +0.8}          & {\color[HTML]{FE0000} \textbf{+1.0}} \\ \bottomrule
\end{tabular}
}
\vspace{-2mm}
\label{tab: results of language models}
\end{table}

\textbf{Results.} As shown in Table~\ref{tab: results of language models}, assigning greater weight to the reconstruction of monosemantic features leads to a significant improvement in the monosemanticity of SAE latents, yielding an average 3.8\% gain in auto-interpretability score when $\alpha =1$. The improvements are consistent across SAEs trained on different layers of backbone models. We also note that the improvement is more significant when the baseline SAE already exhibits relatively strong monosemanticity. These empirical findings further support our theoretical insight in Section \ref{sec::WSAE} that reweighting SAE with monosemanticity level can enhance the recovery of the monosemantic features.
{As a supplement, additional experiments with an alternative Top-$k$ realization, evaluations on the Llama model, and sensitivity analysis are shown in Appendix \ref{app::add_exp}.}

\subsubsection{Experiments on Vision Models}

\begin{wrapfigure}{r}{0.4\textwidth}
    \vspace{-4mm}
    \centering
    \includegraphics[width=1.\linewidth]{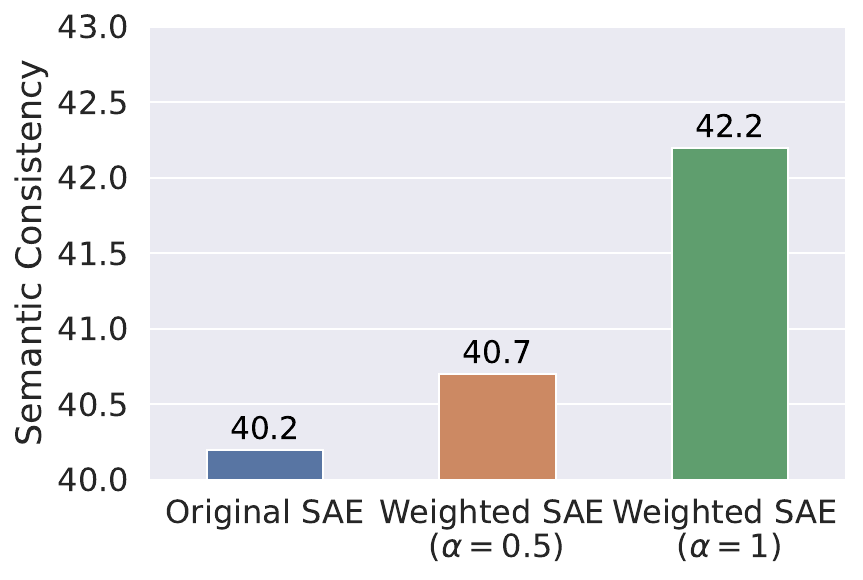}
    \vspace{-3mm}
    \caption{Semantic consistency (\%) of SAEs trained on the embeddings of ResNet-18 with original SAE and weighted SAE loss.}
    \label{Reweight Results on vision models}
    \vspace{-4mm}
\end{wrapfigure}

In addition to language models, we also evaluate our strategy on SAEs trained following pretrained vision backbones. As noted by \cite{wang2024non}, features learned by current vision pertaining paradigms, such as contrastive learning, are often highly polysemantic, making it difficult to select monosemantic dimensions. Consequently, we adopt Non-negative Contrastive Learning (NCL) \citep{wang2024non} to pretrain the backbone, as it can simultaneously obtain monosemantic and polysemantic features. 

Specifically, we pretrain a ResNet-18 with NCL on ImageNet-100, and subsequently train SAEs on the learned representations. During the training process of SAEs, we set the latent dimension to 16384 and use the topK sparse autoencoder with $k=16$. When evaluating monosemanticity, we follow \cite{wang2024non} and use semantic consistency as the metric, which calculates the proportion of top-activated samples that belong to their most frequent class along each dimension. As the higher semantic consistency indicates stronger monosemanticity, we respectively train SAEs with uniform weights and more weights on the dimension with higher semantic consistency. Specifically, denote $\beta_i$ as the semantic consistency of the $i$-th dimension, and we set the weights as $\gamma_i = \beta_i^\alpha$ for $i \in [n_p]$, where $\alpha>0$ is a tunable parameter. In Figure \ref{Reweight Results on vision models}, we observe that assigning greater weight to the reconstruction of monosemantic features leads to a notable improvement in semantic consistency (monosemanticity) of SAE latents. These results further validate both our theoretical analysis and the effectiveness of the proposed weighting strategy.

\section{Conclusion}
In this paper, we establish a theoretical framework to reveal the inherent limits of SAEs to recover ground-truth monosemantic features. Specifically, based on a closed-form theoretical solution, we show that standard SAEs inevitably suffer from feature shrinking and vanishing, preventing full recovery except when the true features are extremely sparse. 
{While
many interpretability studies assume that increasing sparsity or width of SAEs improves feature separation indefinitely, this work shows that such improvement plateaus due to intrinsic representational interference, meaning full disentanglement is mathematically impossible under realistic sparsity.
Consequently, SAE-based interpretability should be regarded as an approximation tool, not as a faithful feature recovery mechanism.
This reframes SAE-derived neurons as approximate projections of overlapping features, rather than direct encodings of ground-truth concepts.
}

Furthermore, to enhance feature recovery in the low sparsity situations, we introduced a simple yet effective strategy called reWeighted Sparse Autoencoder (WSAE) and propose a theoretical weight-selection rule. Validation experiments on both synthetic and real data confirm our theory and demonstrate that WSAEs can enhance monosemanticity and interpretability of features learned. 
We note that the proposed WSAE serves as just one exemplar remedy, and our theoretical framework has the potential to motivate further methodological advances (e.g. alternative matrix designs to close the loss gap) aimed at overcoming the fundamental feature-recovery limits of SAEs.

\section*{Acknowledgement}
Yisen Wang was supported by National Natural Science Foundation of China (92370129, 62376010), Beijing Major Science and Technology Project under Contract no. Z251100008425006, Beijing Nova Program (20230484344, 20240484642), and State Key Laboratory of General Artificial Intelligence.

\section*{Ethics Statement}
This work makes use of publicly available datasets and models. No private or sensitive data are involved, and no harmful content is included. Therefore, we believe this paper does not raise any ethical concerns.

\bibliography{SAE}
\bibliographystyle{iclr2026_conference}






\clearpage
\appendix

\section{Proofs}\label{app::proof}

Without loss of generality, we consider the ReLU sparse activation function for proofs, i.e. $\sigma(x)=\mathrm{ReLU}(x)=x\boldsymbol{1}[x>0]$. 
Note that for other activation functions such as TopK or Jump-ReLU, the theoretical results are similar because they are based on the ReLU function form.
Specifically, we have $\text{Jump-ReLU}(x,c)=\mathrm{ReLU}(x-c)$ and $\text{Top-}k(x) = \mathrm{ReLU}(x-x_{(k+1)}\boldsymbol{1})+x_{(k+1)}\mathrm{H}(x-x_{(k+1)}\boldsymbol{1})$, where $H(x):=\boldsymbol{1}[x\geq 0]$ denotes the Heaviside function, $x_{(k+1)}$ denotes the $(k+1)$-th largest element of $x$, and $\boldsymbol{1}$ denotes the all-one vector. The sparse activation functions have a general form $\sigma(x) = \mathrm{ReLU}(x-c)+b$ and the general gradient form $\frac{\partial \sigma(x)}{\partial x} = \mathrm{H}(x-c)$.

Also note that when $I^*$ is
formed by swapping row $i$ with row $j \neq i$ of the identity matrix $I$,
if $W_p^\top$ is a solution to the SAE loss minimization, then $I^*W_p^\top$ is also a solution because 
\begin{align}
    \mathcal{L}_{\mathrm{SAE}}(I^*W_p^\top;x_p)
    &= \mathbb{E}_x \|W_p x - (I^*W_p^\top)^\top \sigma(I^*W_p^\top W_p x)\|^2
    \nonumber\\
    &= \mathbb{E}_x \|W_p x - W_p (I^*)^\top I^* \sigma(W_p^\top W_p x) \|^2
    \nonumber\\
    &= \mathbb{E}_x \|W_p x - W_p \sigma(W_p^\top W_p x) \|^2
    \nonumber\\
    &= \mathcal{L}_{\mathrm{SAE}}(W_p^\top;x_p),
\end{align}
where the second last equation holds because $(I^*)^\top I^* = I$.
In the following, we will not repetitively prove this point in the proofs of Theorem \ref{thm::argminSAE}, \ref{thm::argminSAES}, and \ref{thm::unique}.

\subsection{Proofs Related to Section \ref{sec::necessary}}

\begin{proof}[Proof of Theorem \ref{thm::argminSAE}]
    By definition, we have
    \begin{align}\label{eq::Lsum}
        &\mathcal{L}_{\mathrm{SAE}}(W_m;x_p)
        \nonumber\\
        &= \mathbb{E}_x \sum_{i\in [n_p]} [W_{p,[i,:]}x - W_{m,[:,i]}^\top \sigma(W_mW_px)]^2
        \nonumber\\
        &= \mathbb{E}_x \sum_{i\in [n_p]} [W_{p,[i,:]}x - \sum_{k \in [n]} W_{m,[k,i]}^\top \sigma(\sum_{j \in [n_p]} W_{m,[k,j]} W_{p,[j,:]}x)]^2.
    \end{align}
    For $u\in [n]$, $v\in [n_p]$, we have
    \begin{align}
        \frac{\partial \mathcal{L}_{\mathrm{SAE}}}{\partial W_{m,[u,v]}}
        &= \mathbb{E}_x \Big\{-2[W_{p,[v,:]}x - W_{m,[:,v]}^\top \sigma(W_m W_p x)] \sigma(W_{m,[u,:]}W_p x)
        \nonumber\\
        &-2\sum_{i \in [n_p]} [W_{p,[i,:]}x - W_{m,[:,i]}^\top \sigma(W_m W_p x)] W_{m,[u,i]} H(W_{m,[u,:]}W_p x) W_{p,[v,:]}x\Big\},
    \end{align}
    where $H(\cdot)$ denotes the Heaviside function.
    Observe that for non-zero $x$, $\frac{\partial \mathcal{L}_{\mathrm{SAE}}}{\partial W_{m,[u,v]}}=0$ if and only if 
    \begin{align}\label{eq::opt_cond}
        \mathbb{E}_x[W_{p,[i,:]}x-W_{m,[:,i]}^\top \sigma(W_mW_px)]=0.
    \end{align}
    When $W_m=W_p^\top$, because the columns of $W_p$ form digons/polygons, we have $W_p^\top W_p$ being a block diagonal matrix, with block $B_1 = I_{m\times m}$ for the $m$ ($0\leq m \leq n_p$) monosemantic dimensions and blocks
    \begin{align}
        B_j=\begin{bmatrix}
        1 & -1/(p_j-1) & \cdots & -1/(p_j-1)\\
        &\cdots\\
        -1/(p_j-1) & -1/(p_j-1) & \cdots & 1
    \end{bmatrix}
    \end{align}
    for the polysemantic dimensions, where $p_j$ is the number of ground truth monosemantic features entangled in the $j$-th block.
    Then we have
    \begin{align}
        \sigma(W_mW_px) &= \sigma\Bigg(
        \begin{bmatrix}
            I_{m\times m} & 0 & \cdots & 0\\
            0 & B_2 & \cdots & 0\\
            & \cdots\\
            0 & 0 & \cdots & B_b
        \end{bmatrix} x\Bigg)
        = \begin{bmatrix}
            \sigma(x_{[1:m]})\\
            \sigma(B_2x_{[m+1:m+p_j]})\\
            \cdots\\
            \sigma(B_bx_{[n-p_b+1:n]})
        \end{bmatrix}.
    \end{align}
    Denote $P_j = m+\sum_{1<l\leq j}p_l$ and note that 
    \begin{align}
        \mathbb{E}_x\sigma(B_jx_{[P_{j-1}+1:P_j]}) = 
        \begin{bmatrix}
            \mathbb{E}_x\sigma(x_{P_{j-1}+1} - \frac{1}{p_j-1}\sum_{P_{j-1}+1<l\leq P_j}x_l)\\
            \cdots\\
            \mathbb{E}_x\sigma(x_{P_j} - \frac{1}{p_j-1}\sum_{P_{j-1}+1\leq l< P_j}x_l)
        \end{bmatrix}.
    \end{align}
    For $i\in [p_j]$, we have
    \begin{align}
        &\mathbb{E}_x\sigma\Big(x_{P_{j-1}+i} - \frac{1}{p_j-1}\sum_{P_{j-1}+1\leq l\leq P_j, l\neq i}x_l\Big)
        \nonumber\\
        &= \mathbb{E}_x\Big(x_{P_{j-1}+i} - \frac{1}{p_j-1}\sum_{P_{j-1}+1\leq l\leq P_j, l\neq i}x_l \Big| x_{P_{j-1}+i} > \frac{1}{p_j-1}\sum_{P_{j-1}+1\leq l\leq P_j, l\neq i}x_l\Big) 
        \nonumber\\
        &\quad\cdot \mathrm{P}\Big(x_{P_{j-1}+i} > \frac{1}{p_j-1}\sum_{P_{j-1}+1\leq l\leq P_j, l\neq i}x_l\Big)
        \nonumber\\
        &:= \mu_{P_{j-1}+i}.
    \end{align}
    Because $x_i$'s are i.i.d., we have $\mu_{P_{j-1}+1}= \cdots =\mu_{P_{j-1}+p_j}:=\mu_j$. Denote $\mu=\mathbb{E}_xx_i$, $\boldsymbol{1}=(1,\ldots,1)^\top$ as the all-one vector, and $\boldsymbol{0}=(0,\ldots,0)^\top$ as the all-zero vector. Then we have
    \begin{align}
        \mathbb{E}_xW_p\sigma(B_jx_{[P_{j-1}+1:P_j]})
        &= W_p \mathbb{E}_x\sigma(B_jx_{[P_{j-1}+1:P_j]})
        = W_p \begin{bmatrix}
            \mu \boldsymbol{1}_{m}\\
            \mu_2 \boldsymbol{1}_{p_2}\\
            \cdots\\
            \mu_b \boldsymbol{1}_{p_b}
        \end{bmatrix}
        = \begin{bmatrix}
            \mu \boldsymbol{1}_{m}\\
            \mu_2 \boldsymbol{0}_{p_2}\\
            \cdots\\
            \mu_b \boldsymbol{0}_{p_b}
        \end{bmatrix}
        = \begin{bmatrix}
            \mu \boldsymbol{1}_{m}\\
            \boldsymbol{0}_{n-m}
        \end{bmatrix},
    \end{align}
    where the second last equation holds because the columns of $W_p$ form digons/polygons.
    
    Therefore, when $W_m=W_p^\top$, we have \eqref{eq::opt_cond} holds because
    \begin{align}
        W_p\mathbb{E}_x x- W_p\sigma(W_mW_px)
        = \mu W_p \boldsymbol{1} - \begin{bmatrix}
            \mu \boldsymbol{1}_{m}\\
            \mu_j \boldsymbol{0}_{n-m}
        \end{bmatrix}
        = \mu \begin{bmatrix}
            \boldsymbol{1}_{m}\\
            \boldsymbol{0}_{n-m}
        \end{bmatrix} - \begin{bmatrix}
            \mu \boldsymbol{1}_{m}\\
            \boldsymbol{0}_{n-m}
        \end{bmatrix}
        = 0,
    \end{align}
    i.e., $W_p^\top \in \arg\min_{W_m} \mathcal{L}_{\mathrm{SAE}}(W_m;x_p)$.
\end{proof}

\subsection{Proofs Related to Section \ref{sec::sufficient}}

\begin{proof}[Proof of Theorem \ref{thm::argminSAES}]
    When $W_m=(W_p, \boldsymbol{0})^\top$, we have
    \begin{align}
        \mathcal{L}_{\mathrm{SAE}}(W_m;x_p) 
        &= \mathbb{E}_x \|W_px - (W_p, \boldsymbol{0}) \sigma(\begin{bmatrix}
            W_p^\top\\
            \boldsymbol{0}
        \end{bmatrix}W_px)\|^2
        \nonumber\\
        &= \mathbb{E}_x \|W_px - W_p^\top \sigma(W_p^\top W_px)\|^2.
    \end{align}
    When $S\to 1$, we have
    \begin{align}\label{eq::sparse}
        &\mathcal{L}_{\mathrm{SAE}}(W_m;x_p)
        \nonumber\\
        &= \sum_{u=1}^n S^n \mathcal{L}(W_m;W_p,x=\boldsymbol{0}) 
        + \sum_{u=1}^n \mathbb{E}_{x_u} C_{n}^1 S^{n-1}(1-S) \mathcal{L}(W_m;W_p,x=x_u\boldsymbol{e}_u)
        + o(1-S)
        \nonumber\\
        &=0 + n S^{n-1}(1-S) \sum_{u=1}^n \mathbb{E}_{x_u} \mathcal{L}(W_m;W_p,x=x_u\boldsymbol{e}_u) + o(1-S)
        \nonumber\\
        &= n S^{n-1}(1-S) \sum_{u=1}^n \mathbb{E}_{x_u} \mathcal{L}(W_m;W_p,x=x_u\boldsymbol{e}_u)
        \nonumber\\
        &= n S^{n-1}(1-S) \sum_{u=1}^n \mathbb{E}_{x_u} \|W_{p,[:,u]} - W_p^\top \sigma(W_p^\top W_{p,[:,u]})\|^2.
    \end{align}
    When the superposed dimensions have negative interferences as discussed in \cite{elhage2022toy}, i.e., $W_{p,[:,i]}^\top W_{p,[:,j]} \leq 0$ for $i\neq j$, we have $\mathcal{L}_{\mathrm{SAE}}(W_m;x_p)=0=\min_{W_m} \mathcal{L}_{\mathrm{SAE}}(W_m;x_p)$.

\end{proof}

Also note that for TopK or JumpReLU activations, feature recovery under potential positive interference is also possible. For example, for top-$k$ activation, when $1<k\leq n_m-p$, where $p:=\max_{i \in [n]} |\{j \in [n] : W_{p,[:,i]}^\top W_{p,[:,j]} < 0\}|$, we have $\text{Top-}k(W_p^\top W_{p,[:,u]})=\boldsymbol{e}_u$, and correspondingly $\mathcal{L}_{\mathrm{SAE}}(W_m;x_p)=0=\min_{W_m} \mathcal{L}_{\mathrm{SAE}}(W_m;x_p)$.

\begin{proof}[Proof of Theorem \ref{thm::unique}]
    Then by definition, we have
    \begin{align}\label{eq::Lsum}
        &\mathcal{L}_{\mathrm{SAE}}(W_m;x_p)
        \nonumber\\
        &= \mathbb{E}_{x} \|W_px - W_m^\top \sigma(W_mW_px)\|_2^2
        \nonumber\\
        &= \mathbb{E}_{x} [W_px - W_m^\top \sigma(W_mW_px)]^\top [W_px - W_m^\top \sigma(W_mW_px)]
        \nonumber\\
        &= \mathbb{E}_{x} x^\top W_p^\top W_p x 
        - 2x^\top W_p^\top W_m^\top \sigma(W_mW_px)
        \nonumber\\
        &\quad+ \sigma(W_mW_px)^\top W_m W_m^\top \sigma(W_mW_px)
        \nonumber\\
        &= \mathbb{E}_{x} x^\top W_p^\top W_p x - 2\sum_{q=1}^{n_p}\sum_{j=1}^{n_m} x^\top W_{p,[q,:]}^\top W_{m,[j,q]} \sigma(\sum_{i=1}^{n_p}W_{m,[j,i]}W_{p,[i,:]}x)
        \nonumber\\
        &\quad+\sum_{j=1}^{n_m}\sum_{q=1}^{n_p}\sigma(\sum_{i=1}^{n_p}W_{m,[j,i]}W_{p,[i,:]}x) W_{m,[j,q]} 
        \sum_{s=1}^{n_m}W_{m,[s,q]} \sigma(\sum_{t=1}^{n_p}W_{m,[s,t]}W_{p,[t,:]}x).
    \end{align}
    If $S\to 1$, by \eqref{eq::sparse}, we have
    \begin{align}
        \mathcal{L}_{\mathrm{SAE}}(W_m;x_p)
        &= n S^{n-1}(1-S) \sum_{u=1}^n \mathbb{E}_{x_u} \mathcal{L}_u(W_m;W_p,x_u),
    \end{align}
    where for $u\in[n]$ and $x_u>0$, by \eqref{eq::Lsum}, we denote
    \begin{align}
        &\mathcal{L}_u(W_m;W_p,x_u)
        \nonumber\\
        &:= x_u^2 W_{p,[:,u]}^\top W_{p,[:,u]} 
        - 2x_u^2\sum_{q=1}^{n_p}\sum_{j=1}^{n_m} W_{p,[q,u]} W_{m,[j,q]} \sigma(\sum_{i=1}^{n_p}W_{m,[j,i]}W_{p,[i,u]})
        \nonumber\\
        &\quad+x_u^2\sum_{j=1}^{n_m}\sum_{q=1}^{n_p}\sigma(\sum_{i=1}^{n_p}W_{m,[j,i]}W_{p,[i,u]}) W_{m,[j,q]} 
        \sum_{s=1}^{n_m}W_{m,[s,q]} \sigma(\sum_{t=1}^{n_p}W_{m,[s,t]}W_{p,[t,u]}).
    \end{align}
    Then for $k\in[n_m]$ and $l\in[n_p]$, by \eqref{eq::sparse}, we have
    \begin{align}
        &\frac{\partial\mathcal{L}_u(W_m;W_p,x_u)}{\partial W_{m,[k,l]}} 
        \nonumber\\
        &= -2x_u^2 W_{p,[l,u]} \sigma (\sum_{i=1}^{n_p}W_{m,[k,i]}W_{p,[i,u]})
        \nonumber\\
        &\quad -2x_u^2 \sum_{q=1}^{n_p} W_{p,[q,u]} W_{m,[k,q]} \mathrm{H}(\sum_{i=1}^{n_p}W_{m,[k,i]}W_{p,[i,u]}) W_{p,[l,u]}
        \nonumber\\
        &\quad + x_u^2 \sum_{q=1}^{n_p} \mathrm{H}(\sum_{i=1}^{n_p}W_{m,[k,i]}W_{p,[i,u]}) W_{p,[l,u]} W_{m,[k,q]} 
        \sum_{s=1}^{n_m}W_{m,[s,q]} \sigma (\sum_{t=1}^{n_p}W_{m,[s,t]}W_{p,[t,u]})
        \nonumber\\
        &\quad + x_u^2 \sigma (\sum_{i=1}^{n_p}W_{m,[k,i]}W_{p,[i,u]}) \sum_{s=1}^{n_m}W_{m,[s,l]} \sigma (\sum_{t=1}^{n_p}W_{m,[s,t]}W_{p,[t,u]})
        \nonumber\\
        &\quad + x_u^2 \sum_{j=1}^{n_m} \sigma (\sum_{i=1}^{n_p}W_{m,[j,i]}W_{p,[i,u]}) W_{m,[j,l]} 
        \sigma(\sum_{t=1}^{n_p}W_{m,[k,t]}W_{p,[t,u]})
        \nonumber\\
        &\quad + x_u^2 \sum_{j=1}^{n_m}\sum_{q=1}^{n_p}\sigma(\sum_{i=1}^{n_p}W_{m,[j,i]}W_{p,[i,u]}) W_{m,[j,q]} 
        W_{m,[k,q]} \mathrm{H}(\sum_{t=1}^{n_p}W_{m,[k,t]}W_{p,[t,u]}) W_{p,[l,u]}
        \nonumber\\
        &= 2x_u^2 \sum_{q=1}^{n_p} \mathrm{H}(\sum_{i=1}^{n_p}W_{m,[k,i]}W_{p,[i,u]}) W_{p,[l,u]} W_{m,[k,q]} 
        \nonumber\\
        &\qquad\qquad \cdot \Big[\sum_{s=1}^{n_m}W_{m,[s,q]} \sigma(\sum_{t=1}^{n_p}W_{m,[s,t]}W_{p,[t,u]}) - W_{p,[q,u]}\Big]
        \nonumber\\
        &\quad + 2x_u^2 \sigma(\sum_{i=1}^{n_p}W_{m,[k,i]}W_{p,[i,u]}) \Big[\sum_{s=1}^{n_m}W_{m,[s,l]} \sigma(\sum_{t=1}^{n_p}W_{m,[s,t]}W_{p,[t,u]}) - W_{p,[l,u]}\Big].
    \end{align}
    Observe that $\frac{\partial\mathcal{L}_u(W_m;W_p,x_u)}{\partial W_{m,[k,l]}}=0$ for $k\in[n_m]$ and $l\in[n_p]$ holds if and only if 
    \begin{align}
        \sum_{s=1}^{n_m}W_{m,[s,l]} \sigma(\sum_{t=1}^{n_p}W_{m,[s,t]}W_{p,[t,u]}) = W_{p,[l,u]}
    \end{align}
    for $l\in[n_p]$, or equivalently
    \begin{align}
        W_{p,[:,u]} = W_m^\top \sigma(W_mW_{p,[:,u]})
    \end{align}
    \begin{align}\label{eq::uniqueW}
        W_p = W_m^\top \sigma(W_mW_p).
    \end{align}
    When $n_m=n$ and the superposed dimensions have negative interferences as discussed in \cite{elhage2022toy}, i.e., $W_{p,[:,i]}^\top W_{p,[:,j]} \leq 0$ for $i\neq j$, for arbitrary $W_p \in \mathbb{R}^{n_p\times n}$, the unique solution is $W_m=W_p^\top$ (see the following for details). 
    
    For a special example, if $W_p=I_{2\times 2}$, \eqref{eq::uniqueW} becomes $I=W_m^\top \sigma(W_m)$, and we observe that $W_m=W_p^\top=I$ is a solution. Then if $W_m \neq I_{2\times 2}$, we can have a decomposition that 
    \begin{align}
        W_m = I + E_1 - E_2 + D,
    \end{align}
    where $E_1$ and $E_2$ are off-diagonal with non-negative elements on mutually exclusive positions, and $D$ is a diagonal matrix.
    Then \eqref{eq::uniqueW} becomes 
    \begin{align}
        I &= (I + E_1 - E_2 + D)^\top \sigma(I + E_1 -E_2 + D)
        \nonumber\\
        &= (I + E_1 - E_2 + D)^\top \sigma(I + E_1 + D)
        \nonumber\\
        &= I + E_1^\top - E_2^\top + D^\top + E_1 + E_1^\top E_1 - E_2^\top E_1 + D^\top E_1 + D + E_1^\top D - E_2^\top D + D^\top D
        \nonumber\\
        &= I + (2D + D^2 + E_1^\top E_1) + (E_1^\top - E_2^\top + E_1),
    \end{align}
    where the last equation holds because $E_2^\top E_1 = D^\top E_1 = E_2^\top D = 0$. Note that $2D + D^2 + E_1^\top E_1$ is diagonal and $E_1^\top - E_2^\top + E_1$ is off-diagonal, so there has to hold that $2D + D^2 + E_1^\top E_1 = 0$ and $E_2 = E_1 + E_1^\top$. Let $E_1 = \begin{bmatrix}
        0& \varepsilon_1\\
        \varepsilon_2& 0
    \end{bmatrix}$ with $\varepsilon_1, \varepsilon_2 \geq 0$. Then there has to hold $E_2 = E_1 + E_1^\top = \begin{bmatrix}
        0& \varepsilon_1+\varepsilon_2\\
        \varepsilon_1+\varepsilon_2& 0
    \end{bmatrix}$, which contradicts with that $E_1$ and $E_2$ have non-negative elements on mutually exclusive positions if $\varepsilon_1 \text{ or } \varepsilon_2 > 0$. Therefore, it has to be $E_1 = E_2 = 0$, and accordingly $D=0$. That is, we prove that $W_m = I$ is the unique solution to $I=W_m^\top \sigma(W_m)$, and that $W_m = W_p^\top$ is the unique solution to \eqref{eq::uniqueW} for all $W_p$.

\end{proof}

\subsection{Proofs Related to Section \ref{sec::WSAE}}

\begin{proof}[Proof of Theorem \ref{thm::decomp}]
    By definition, we have
    \begin{align}
        \mathcal{L}_{\mathrm{SAE}} (W_m;x_p)
        &= \mathbb{E}_x \|W_px - W_m^\top \sigma(W_mW_px)\|^2
        \nonumber\\
        &= \mathbb{E}_x \|W_p[x - \sigma(W_mW_px)] + (W_p - W_m^\top) \sigma(W_mW_px)\|^2
        \nonumber\\
        &= \mathbb{E}_x \|W_p[x - \sigma(W_mW_px)]\|^2 + \|(W_m - W_p^\top)^\top \sigma(W_mW_px)\|^2
        \nonumber\\
        &\quad + 2\sigma(W_mW_px)^\top (W_m - W_p^\top) W_p [x - \sigma(W_mW_px)].
        \label{eq::decomp2}
    \end{align}
    The first term of \eqref{eq::decomp2} can be further decomposed into
    \begin{align}
        &\|W_p[x - \sigma(W_mW_px)]\|^2
        \nonumber\\
        &=[x - \sigma(W_mW_px)]^\top W_p^\top W_p [x - \sigma(W_mW_px)]
        \nonumber\\
        &= [x - \sigma(W_mW_px)]^\top [x - \sigma(W_mW_px)]
        \nonumber\\
        &\quad + [x - \sigma(W_mW_px)]^\top (W_p^\top W_p - I_{n\times n}) [x - \sigma(W_mW_px)]
        \nonumber\\
        &= \|x - \sigma(W_mW_px)\|^2 
        \nonumber\\
        &\quad + [x - \sigma(W_mW_px)]^\top (W_p^\top W_p - I_{n\times n}) [x - \sigma(W_mW_px)].
        \label{eq::decomp22}
    \end{align}
    Combining \eqref{eq::decomp2}, \eqref{eq::decomp22}, and that $W_m-W_p^\top=0$ finishes the proof.
\end{proof}

\begin{proof}[Proof of Theorem \ref{thm::decompw}]
    By definition, we have
    \begin{align}
        \mathcal{L}_{\mathrm{WSAE}} (W_m;x_p)
        &= \mathbb{E}_x \|\Gamma [W_px - W_m^\top \sigma(W_mW_px)]\|^2
        \nonumber\\
        &= \mathbb{E}_x \|\Gamma W_p[x - \sigma(W_mW_px)] + \Gamma(W_p - W_m^\top) \sigma(W_mW_px)\|^2
        \nonumber\\
        &= \mathbb{E}_x \|\Gamma W_p[x - \sigma(W_mW_px)]\|^2 + \|\Gamma (W_m - W_p^\top)^\top \sigma(W_mW_px)\|^2
        \nonumber\\
        &\quad + 2\sigma(W_mW_px)^\top (W_m - W_p^\top) \Gamma^\top \Gamma W_p [x - \sigma(W_mW_px)].
        \label{eq::decomp2w}
    \end{align}
    The first term of \eqref{eq::decomp2w} can be further decomposed into
    \begin{align}
        &\|\Gamma W_p[x - \sigma(W_mW_px)]\|^2
        \nonumber\\
        &=[x - \sigma(W_mW_px)]^\top W_p^\top \Gamma^\top \Gamma W_p [x - \sigma(W_mW_px)]
        \nonumber\\
        &= [x - \sigma(W_mW_px)]^\top [x - \sigma(W_mW_px)]
        \nonumber\\
        &\quad + [x - \sigma(W_mW_px)]^\top (W_p^\top \Gamma^\top \Gamma W_p - I_{n\times n}) [x - \sigma(W_mW_px)]
        \nonumber\\
        &= \|x - \sigma(W_mW_px)\|^2 
        \nonumber\\
        &\quad + [x - \sigma(W_mW_px)]^\top (W_p^\top \Gamma^\top \Gamma W_p - I_{n\times n}) [x - \sigma(W_mW_px)].
        \label{eq::decomp22w}
    \end{align}
    Combining \eqref{eq::decomp2w}, \eqref{eq::decomp22w}, and that $W_m-W_p^\top=0$ finishes the proof.
\end{proof}

{
\section{Discussions of the Theoretical Framework}\label{sec::thm_extension}
In this section, we discuss the significance and potential extensions of our theoretical results.

\textbf{Relationship to Sparse dictionary learning (SDL) and identifiability.} Sparse dictionary learning (SDL) and SAEs are closely related in that both aim to represent data using sparse latent codes, but they differ fundamentally in how this objective is achieved. SDL is formulated as a strictly linear model in which the data are approximated as a linear combination of dictionary atoms, whereas SAEs replace this sparse-coding step with a nonlinear encoder, typically using ReLU-based activations. In this sense, our theorems can be interpreted as a nonlinear extension of classical dictionary identifiability results. Specifically, from the perspective of identifiability, we can say an SAE is identifiable if the reconstruction of polysemantic features $x_p$ leads to the perfect reconstruction of ground truth feature $x$, i.e., $\tilde{x}_p=x_p\ \Rightarrow\ x_m=x$ up to index reordering and zero padding. 
Then the observed phenomena of feature shrinking and feature vanishing correspond to forms of partial non-identifiability that arise when the sparsity level is low and the nonlinear encoder cannot perfectly disentangle the ground truth features. On the contrary, by Theorems \ref{thm::argminSAES} and \ref{thm::unique}, we prove the identifiability of SAEs under the extreme sparsity condition. Moreover, Theorem \ref{thm::decomp} shows that the ground truth reconstrction error of SAEs, which can also be viewed as a measure of identifiability, depends on the interference term ($W_p^\top W_p-I$), which has similar forms to the coherence condition of SDL. In addition, by Theorem \ref{thm::decompw}, we show that introducing additional weights to the interference terms can enhance SAE identifiability.

\textbf{Extension to hierarchical feature structures.} From empirical observations of the related works, the general features can be viewed as a combination of the specialized ones. For example, the activation of the general feature "blue" is a combination of activations of specialized features "blue circle", "blue square", and "blue triangle" with positive weights, because ideally, if any of the three specialized feature fires, the general "blue" should fire. Therefore, as a possible extension of our theoretical framework, we can view the general features $x_h$ as linear combinations of the specialized features $x$ assumed in our submission, i.e. $x_h = W_hx$. Then the polysemantic SAE inputs $x_p = W_p'x_h = W_p' W_h x := W_p x$ still coincide with our theoretical formulations. Note that under this formulation, for any concatenations of general and specialized features $x_{concat}$, there exists a weight matrix $W_p^*$ such that $x_p = W_p^* x_{concat}$. Then according to our theorems, if $x_{concat}$ features are non-overlapping and extremely sparse, they can be fully recovered by SAEs. This can partly explain the feature absorption phenomenon that the general features are absorbed by the overlapping specialized ones, because non-overlapping sparse features are more recoverable.

\textbf{Extension to non-linear feature combinations.} We consider the attention-weighted combinations of features. We discuss that as $\mathrm{softmax}(\cdot)\approx \mathrm{max}(\cdot)$, the attention-weighted combinations can be approximately reduced to a linear form about certain features, so the theoretical results in our paper still hold. Specifically, for an attention-weighted combinations of features, we denote the polysemantic feature as
$$
\boldsymbol{x}_{p,t} = \sum_{i=1}^n a_{t,i} v_i = \sum_{i=1}^n \frac{(W_q\boldsymbol{x}_t)^\top (W_k \boldsymbol{x}_i)}{\sum_j (W_q\boldsymbol{x}_t)^\top (W_k \boldsymbol{x}_j)}W_v \boldsymbol{x}_i,
$$
where $W_q$, $W_k$, and $W_v$ denote query, key, and value matrices.
Note that $\frac{(W_q\boldsymbol{x}_t)^\top (W_k \boldsymbol{x}_i)}{\sum_j (W_q\boldsymbol{x}_t)^\top (W_k \boldsymbol{x}_j)}=\mathrm{softmax}((\boldsymbol{x}_t^\top W_q^\top W_k \boldsymbol{x}_j)_{j\in [n]})_i\approx \boldsymbol{1}[i=\arg\max_j \boldsymbol{x}_t^\top W_q^\top W_k \boldsymbol{x}_j]$, so we have
$$
\boldsymbol{x}_{p,t} \approx \sum_{i=1}^n \boldsymbol{1}[i=\arg\max_j \boldsymbol{x}_t^\top W_q^\top W_k \boldsymbol{x}_j] W_v \boldsymbol{x}_i = W_v \boldsymbol{x}_{\arg\max_j \boldsymbol{x}_t^\top W_q^\top W_k \boldsymbol{x}_j}.
$$
Then we have the SAE risk as
\begin{align}
    \mathcal{L}_{\mathrm{SAE}} &= \mathbb{E}_{\boldsymbol{x}} \sum_{t=1}^n \|\boldsymbol{x}_{p,t} - W_m^\top \sigma(W_m\boldsymbol{x}_{p,t})\|^2 
    \nonumber\\
    &\approx \mathbb{E}_{\boldsymbol{x}} \sum_{t=1}^n \|W_v \boldsymbol{x}_{\arg\max_j \boldsymbol{x}_t^\top W_q^\top W_k \boldsymbol{x}_j} - W_m^\top \sigma(W_m W_v \boldsymbol{x}_{\arg\max_j \boldsymbol{x}_t^\top W_q^\top W_k \boldsymbol{x}_j})\|^2.
\end{align}
By similar proofs of Theorems \ref{thm::argminSAE}-\ref{thm::unique}, if $W_v$ forms certain geometry or has non-positive interferences, then for $t \in [n]$, we have
$$
W_m^* = W_v^\top = \arg\min_{W_m} \mathbb{E}_{\boldsymbol{x}} \|\boldsymbol{x}_{p,t} - W_m^\top \sigma(W_m\boldsymbol{x}_{p,t})\|^2,
$$
and accordingly $W_m^* = W_v^\top = \argmin_{W_m} \mathcal{L}_{\mathrm{SAE}}$.
With this closed-form solution, we can show that the insights of SAE feature recovery still hold. Moreover, by similar proofs, the insights of Theorems 4-5 also still hold, only replacing $x$ with $\boldsymbol{x}_{\arg\max_j \boldsymbol{x}_t^\top W_q^\top W_k \boldsymbol{x}_j}$ in the mathematic forms.

\textbf{Extensions of loss design beyond reweighting.} As Theorem \ref{thm::decompw} is based on a very general theoretical framework, any design of the $\Gamma$ matrix that shrinks the gap term $W_p^\top\Gamma^\top \Gamma W_p - I_{n\times n}$ could improve the SAE feature recovery. In our submission, we propose the most simple and intuitive reweighting strategy for an example to demonstrate this. Nonetheless, alternative designs are also foreseeable. For example, although $W_p$ is unknown, according to the closed-form solutions derived in Theorems 1-3, we can in turn estimate it using $W_m^\top$. Then we can deliberately estimate a triangular matrix $\Gamma$ to minimize $\|W_p^\top\Gamma^\top \Gamma W_p - I_{n\times n}\|$. A possible risk of this method would be the estimation error of $W_p$ can be large during the early training stages. Another possible alternative direction would be to include additional regularization term $\|\mathrm{trace}(W_p^\top W_p \Gamma^\top \Gamma)-n\|$. Specifically, note that $\mathrm{trace}(W_p^\top \Gamma^\top \Gamma W_p) = \mathrm{trace}(W_p^\top W_p \Gamma^\top \Gamma)$. Then we can design a triangular matrix $\Gamma$ such that $\mathrm{trace}(W_p^\top W_p \Gamma^\top \Gamma)=\mathrm{trace}(I_{n\times n})=n$, where the cross-feature interference $W_p^\top W_p$ can be directly estimated from data. 
}

\section{Experiments Details}\label{app::exp}
\label{sec::experiemt details}

\subsection{Additional Experiments}\label{app::add_exp}

\textbf{Alternative realization for Top-$k$.}
We conduct additional validation experiments for Section \ref{sec::exp_lang} with an alternative realization of the Top-$k$ activation. Specifically, the Top-$k$ dimensions are selected by decomposing the dimensions into $k$ groups and choosing the Top-$1$ activated dimensions in each group. This realization could largely enhance the computational efficiency by allowing parallel computing. The results are shown in Table \ref{tab::alt_topk}. 
The results show that compared with SAE features, WSAE features obtain higher auto-interpretability scores (i.e., stronger monosemanticity) across different situations.

\begin{table}[!h]
\vspace{-4mm}
\caption{Auto-interpretability scores (\%) of SAEs trained following different layers (9-11) of Pythia-160M with original SAE and weighted SAE loss.}
\centering
\begin{tabular}{llll}
\toprule
 & 9 & 10 & 11 \\
\midrule
Original SAE & 69.0 & 61.8 & 56.3 \\
Weighted SAE ($\alpha=1$) & \textbf{71.5} & \textbf{63.7} & \textbf{59.0} \\
Gains & \color[HTML]{FE0000} \textbf{+2.5} & \color[HTML]{FE0000} \textbf{+1.9} & \color[HTML]{FE0000} \textbf{+2.7} \\ 
\bottomrule
\end{tabular}
\vspace{-2mm}
\label{tab::alt_topk}
\end{table}

\begin{table}[!h]
\caption{{Auto-interpretability scores (\%) of Llama-3-8B with original SAE and weighted SAE loss.}}
\centering
\begin{tabular}{lcccc}
\toprule
Layer & 8 & 16 & 24 & 32 \\
\midrule
SAE  & 62.0 & 67.0 & 66.7 & 61.4 \\
WSAE ($\alpha=1$) & \textbf{66.5} & \textbf{80.2} & \textbf{70.5} & \textbf{63.6} \\
Gain & \color[HTML]{FE0000} \textbf{+4.5} & \color[HTML]{FE0000} \textbf{+2.8} & \color[HTML]{FE0000} \textbf{+3.8} & \color[HTML]{FE0000} \textbf{+2.2} \\
\bottomrule
\end{tabular}
\vspace{-2mm}
\label{tab::llama}
\end{table}

{\textbf{Evaluation on Llama.}
We conduct additional experiments on Llama-3-8B and present the last-layer results in Table \ref{tab::llama}. The results show that compared with SAE features, WSAEs show a consistent improvement in interpretability compared with the original SAE across different layers, which demonstrates the effectiveness of our proposed WSAE. This result coincides with our result on the smaller Pythia-160M.}

\begin{figure}[!h]
    \centering
    \subfigure[{Sensitivity against $\alpha$.}]{
    \includegraphics[width=0.3\linewidth]{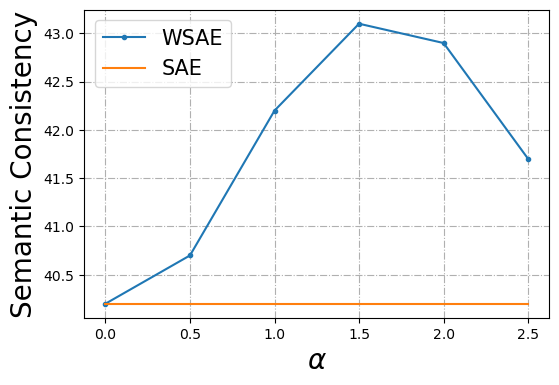}
    \label{fig::sensitivity_alpha}
    }
    \subfigure[{Sensitivity against proxy.}]{
    \includegraphics[width=0.3\linewidth]{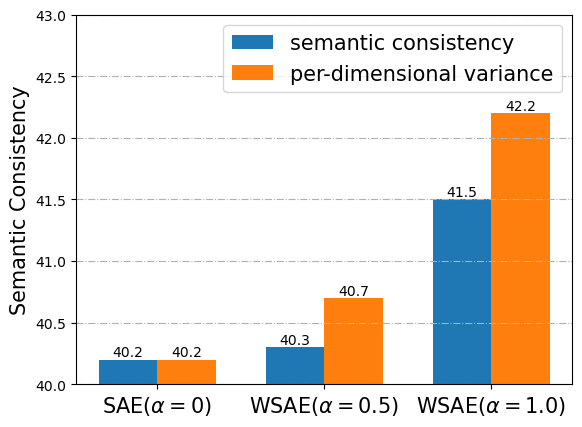}
    \label{fig::sensitivity_proxy}
    }
    \subfigure[{SAE reconstruction error.}]{
    \includegraphics[width=0.3\linewidth]{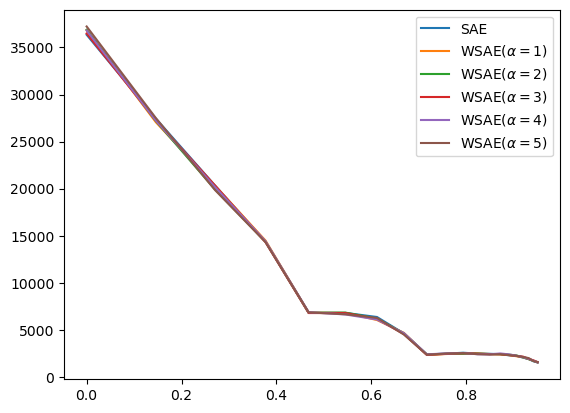}
    \label{fig::pareto}
    }
    \caption{{(a) Semantic consistency of WSAEs under different $\alpha$. (b) Semantic consistency of SAE and WSAEs with different monosemanticity proxies, including semantic consistency and per-dimensional variance. (c) SAE reconstruction error of WSAEs under different $\alpha$.}}
    \label{fig::add_exp}
\end{figure}

{
\textbf{Sensitivity analysis regarding $\alpha$.} We run experiments with $alpha \in \{0, 0.5, 1.0, 1.5, 2.0, 2.5\}$ and report the semantic consistency. The other settings are kept exactly the same with that of Figure 4 in our submission. The results are shown in Figure \ref{fig::sensitivity_alpha}. We show that in wide selection range of $\alpha$, our WSAE show a consistent advantage over SAE ($\alpha=0$) with respect to interpretability, which demonstrates the robustness of WSAE against $\alpha$. 

\textbf{Sensitivity analysis regarding monosemanticity proxy.}
We conduct an empirical comparison of different proxies ($s_i$) (variance vs. semantic metrics) to test transferability. Specifically, as an extension of Figure \ref{fig::toyexp}, we replace the semantic consistency with per-dimensional variance. The results are reported in Figure \ref{fig::sensitivity_proxy}. We show that WSAEs with the two proxies both show a consistent advantage over SAE ($\alpha=0$) with respect to interpretability, and the variation trend against $\alpha$ is also similar. This demonstrates the robustness of WSAE against proxy selection of monosemanticity. 

\textbf{Influence of reweighting on the Pareto frontier.} In Figure \ref{fig::pareto}, we show the SAE reconstruction error of $x_p$ of WSAEs under different $\alpha$'s on the synthetic dataset, where the $x$-axis is the sparsity level of the ground truth features. We keep the other settings the same as Figure \ref{fig::toyexp}. We show that the SAE reconstruction errors of WSAEs are approximately the same as those of the original SAE, and the results remain robust across a relatively wide range of $\alpha$ selections. This indicates that WSAEs still roughly lie on the Pareto frontier.
}

\subsection{Compute Resources}

For the experiments on toy models in Section \ref{sec::condition}, we conduct experiments on an NVIDIA 3090 GPU with 24GB of memory. The training and evaluation of a toy model takes around 30 minutes. For the experiments on language models in Section \ref{sec::exp1}, we conduct experiments on an NVIDIA A100 GPU with 40GB of memory. The training of a sparse autoencoder takes around 24 hours, and the evaluation needs 10 minutes. While for experiments on vision models in Section \ref{sec::exp2}, we conduct experiments on two NVIDIA 3090 GPUs with 24GB of memory. The training of a sparse autoencoder takes around 12 hours, and the evaluation needs 5 minutes.

\subsection{Average Activated Features}\label{app::aaf}

For a sample $ x\in \mathbb{R}^n$, we perform $n$ encodings, each time isolating a single input feature. Specifically, for the $i$-th encoding, we construct $x^i=(0,\cdots,x_i,\cdots,0)$, where only the $i$-th feature is retained. This yields $n$ SAE latent representations $\{h(x^i)\}_{i=1}^n$, each corresponding to the activation induced by a single input feature. We then calculate the total activation values across different samples in each SAE latent dimension. For example, for $j$-th dimension in SAE latents, we define $M_j^i = \sum_{x} h_j(x^i)$ representing the cumulative activation from the $i$-th input feature. If the value $M_j^i$ exceeds a threshold, we consider the $j$-th SAE latent to be activated by the $i$-th input. Finally, we compute the average number of input features that activate each SAE latent dimension. We note that if the SAE successfully recovers monosemantic features, each latent should be activated by only a small number of input features.

\subsection{An Autointerpretation Protocol}
 The autointerpretability process consists of five steps and yields both an interpretation and an autointerpretability score:
\begin{enumerate}
    \item On each of the first 50,000 lines of OpenWebText, take a 64-token sentence-fragment, and measure the feature’s activation on each token of this fragment. Feature activations are rescaled to integer values between 0 and 10.
    \item Take the 20 fragments with the top activation scores and pass 5 of these to GPT-4, along with the rescaled per-token activations. Instruct GPT-4 to suggest an explanation for when the feature (or neuron) fires, resulting in an interpretation.
    \item Use GPT-3.56 to simulate the feature across another 5 highly activating fragments and 5 randomly selected fragments (with non-zero variation) by asking it to provide the per-token activations. While the process described in Bills et al. (2023) uses GPT-4 for the simulation step, we use GPT-3.5. This is because the simulation protocol requires the model’s logprobs for scoring, and OpenAI’s public API for GPT-3.5 (but not GPT-4) supports returning logprobs.
    \item Compute the correlation of the simulated activations and the actual activations. This correlation is the autointerpretability score of the feature. The texts chosen for scoring a feature can be random text fragments, fragments chosen for containing a particularly high activation of that feature, or an even mixture of the two. We use a mixture of the two unless otherwise noted, also called ‘top-random’ scoring.
    \item If, amongst the 50,000 fragments, there are fewer than 20 which contain non-zero variation in activation, then the feature is skipped entirely.
\end{enumerate}

\end{document}